\newtheorem{lemma}{Lemma}
\newtheorem{theorem}{Theorem}
\newtheorem{definition}{Definition}
\newtheorem{proposition}{Proposition}
\newcommand{\opt}{\mathrm{opt}}
\newcommand{\non}{\mathrm{non}}
\newcommand{\same}{\mathrm{same}}
\newcommand{\high}{\mathrm{high}}
\newcommand{\set}{\mathrm{set}}
\newcommand{\bridge}{\mathrm{bridge}}
\begin{document}

\title{Novel  Analysis of  Population Scalability in Evolutionary Algorithms}
\author{Jun He\thanks{Corresponding author. \href{mailto:jun.he@ieee.org}{jun.he@ieee.org} }\\Department of Computer Science\\ Aberystwyth University, Aberystwyth, SY23 3DB, U.K. \\  
  \and Tianshi Chen \\Institute of Computing Technology\\ Chinese Academy of Sciences, Beijing 100190, China 
  \and Boris Mitavskiy\\Department of Computer Science\\ Aberystwyth University, Aberystwyth, SY23 3DB, U.K. }

\date{\today}
\maketitle

\begin{abstract}
Population-based evolutionary algorithms (EAs) have been widely applied to solve various optimization problems. The question of how the performance of a population-based EA  depends on the population size arises naturally. The performance of an EA may be evaluated by different measures, such as the average convergence rate to the optimal set per generation or the expected number of generations to encounter an optimal solution for the first time.  Population scalability is the performance ratio between  a benchmark EA and another  EA using identical genetic operators but a larger population size. Although intuitively the performance of an EA may improve if its population size increases, currently there exist only a few   case studies  for simple fitness functions. This paper aims at providing a general study for discrete optimisation. A novel approach  is introduced to analyse population scalability using the fundamental matrix.  The following two contributions summarize the major results of the current article. (1) We demonstrate rigorously that for elitist EAs with identical global mutation, using a lager population size always increases the average rate of convergence to the optimal set; and yet, sometimes, the expected number of generations needed to find an optimal solution (measured by either the maximal value or the average value) may increase, rather than decrease. (2) We establish sufficient and/or necessary conditions for the superlinear scalability, that is, when the average convergence rate of a $(\mu+\mu)$ EA (where $\mu\ge2$) is bigger than $\mu$ times that of a $(1+1)$ EA.
\end{abstract}

\section{Introduction}
Population-based evolutionary algorithms (EAs) have been widely applied to tackle a variety of optimization problems. A wide number of approaches is available to design efficient population-based EAs. Using a population delivers many benefits~\citep{prugel2010benefits}. A commonly accepted intuitive hunch is that the performance of such an EA may improve if its population size increases. Nonetheless, sometimes an intuitive rule of thumb may be bridgeable, hence a rigorous analysis is highly desirable. Currently there are only a few case studies for simple fitness functions, but no general result has been established so far.

Population scalability describes the relationship between the
performance of an  EA and its population size. The actual meaning of a population-based EA
should be interpreted as a family of EAs using identical genetic operators but
different population sizes. Considering a benchmark EA and another EA
  in the family with a population size  larger than the benchmark EA,   population
scalability (scalability for short) is measured intuitively as the ratio,
\begin{equation}\label{equScalability}
\mbox{scalability}= \frac{\mbox{performance of a benchmark EA}}{\mbox{performance of an  EA with a larger population size}}.
\end{equation}

To make use of the above formula, it is necessary to clarify the meaning of the ``performance'' of an EA.  Since EAs are iterative methods, the following two measures are fundamental in evaluating their performance from both, theoretical and practical points of view.
\begin{description}
\item  [Convergence rate:] the rate of an EA converging to the optimal set per generation~\citep{suzuki1995markov,he1999convergence}.
 The convergence rate is a measure applicable in both, numerical and discrete optimization problems.

\item  [Expected  number of generations:] the average number of generations needed to encounter an optimal solution for the first time \footnote{In EAs (but never in iterative methods), the  performance  is also evaluated by  the expected number of fitness evaluations needed to find an optimal solution. Since in   a $(\mu+\mu)$ EA, the  number of fitness evaluations    per generation is fixed to $\mu$, so the expected number of fitness evaluations   $=\mu \times$  the expected number of generations.  Therefore it is sufficient to study population scalability using the number of generations.}.
This measure  is  not suitable for numerical optimization, where an EA usually needs an infinite number of generations to find an exact solution.
\end{description}

To simplify the analysis, a $(1+1)$ EA plays the role of the benchmark EA in the paper. Another EA is a
$(\mu+\mu)$ EA  where  $\mu$ $(\ge 2)$ is the population size (an integer). Furthermore EAs under consideration satisfy the following conditions: (1) they are applied to tackle discrete optimization problems; (2) they are convergent; (3) genetic operators include mutation and selection only; and (4) genetic operators are selected in the same fashion at every generation.

The problem of population scalability is rather challenging in the theory of population-based EAs. In order to estimate population scalability, it is necessary to acquire the exact value of the  convergence rate or the expected number of generations it takes to reach an optimal solution for both, $(1+1)$ and $(\mu+\mu)$ EAs. The difficulty is that the convergence rate depends on the initial population and varies from one generation to another; and the expected number of generations depends on the initial populations. Furthermore the search spaces corresponding to the $(1+1)$ and $(\mu+\mu)$ EAs have different dimensions. Therefore the notion of population scalability should be based on the  ``overall''  performance of an EA, but how does one define the ``overall'' performance?

Convergent EAs can be modelled via absorbing Markov chains \citep{rudolph1998finite,he2003towards}, and one of the most elegant aspects of the theory of absorbing Markov chains revolves around  the notion of the fundamental matrix, $\{N(X, \, Y)\}_{X, \, Y \text{ are transient states}}$ where $N(X,Y)$ is the expected number of visits to the transient state $X$ starting with a transient states $Y$ prior to absorption. 

Clearly given a $(1+1)$ EA and a $(\mu+\mu)$ EA (where $\mu\ge 2$), it is impossible to make an ``entrywise'' comparison between their fundamental matrices  (the expected number of visits) since the dimensions of the corresponding matrices are different. Instead, such a comparison should be based on a non negative-valued function of the matrix, such as the spectral radius or a matrix norm of the fundamental matrix from the viewpoint of mathematics. The spectral radius of a matrix is the maximum of the absolute values of its eigenvalues.

This paper focuses on the spectral radius $\rho(\mathbf{N})$. In section~\ref{secMarkov} we show that  $1/\rho(\mathbf{N})$ is the average rate of convergence to the optimal set; and $\rho(\mathbf{N})$ is a "max-min" value  related to the expected number of visits to a transient state. Population scalability is measured rigorously in terms of the ratio of the spectral radii of the corresponding fundamental matrices:
\begin{align}
&\frac{\mbox{spectral radius of the fundamental matrix of  the $(1+1)$ EA}}{\mbox{spectral radius of the fundamental matrix of the $(\mu+\mu)$ EA}}.
\end{align}

The aim of the current article is to compare the average convergence rate towards an optimal solution (i.e. an absorbing set of states) of the corresponding Markov transition matrices modelling a $(\mu+\mu)$ EA and a $(1+1)$ EA. This makes the paper very different from the majority of prior research trends, that study the expected number of generations. This work addresses the following two fundamental questions.
\begin{enumerate}
\item Does the average convergence rate increase as the population size does, and, if so, then under what kind of circumstances does this take place?
Intuitively this seems trivial, since a $(\mu+\mu)$ EA employs more individuals than a $(1+1)$ EA does, nonetheless, a proof is required to confirm this.

\item As the population size increases from $1$ to $\mu$, under what kind of circumstances does the average convergence rate increase by a factor bigger than $\mu$?
This is another intuitive principle, since the number of individuals employed by a $(\mu+\mu)$ EA  is $\mu$ times that by the corresponding $(1+1)$ EA.
\end{enumerate}

The paper is organized as follows: A review of previous related research is given in Section \ref{secRelated}. Convergence rate and population scalability are formally introduced in Section~\ref{secMarkov}. Section~\ref{secScalability} aims at answering the first question stated above while Section~\ref{secGeneralStudy} aims at answering the second one. Sections~\ref{secCaseStudy1} and \ref{secCaseStudy2} are devoted to the case studies of non-bridgeable and bridgeable fitness landscapes.  Finally, Section~\ref{secConcluion} concludes the paper and   discuss other types of EAs.
\section{Related Work }
\label{secRelated}
The study of  population scalability in evolutionary computation can be traced to early 1990s. \citet{goldberg1992genetic} presented a population sizing equation to show how a large population size helps an EA to distinguish between good and bad building blocks on some test problems.  \citet{muhlenbein1993science} studied the critical (minimal) population size that can guarantee convergence to the optimum. An adoptive scheme to control the population size has been proposed in \citet{arabas1994gavaps} and the effectiveness of the proposed methodology has been validated through an empirical study. A review of various techniques to control EA's parameters where an adjustment of the population size has been emphasized as an important research issue appears in \citet{eiben1999parameter}. A link between the population size and the quality of the solution has been exhibited in \citet{harik1999gambler} via an analogy between one-dimensional random walks and EAs. While the approximate population sizing models proposed in the investigations mentioned above may shed some light on deciding a ``promising'' population size, the effectiveness of the models has been validated only via various case studies based upon specific optimization problems.

There do exist a few rigorous results about population scalability. As one of the earliest rigorous analysis,  \citet{he2002individual} investigated how the expected hitting time of EAs varies as the population size increases. A while later \citet{he2006analysis} exhibited a link between population scalability and parallelism. A study of the population scalability of the $(1 + \mu)$ EA on three pseudo-Boolean functions, Leading-Ones, One-Max and Suf-Samp appears in \citet{jansen2005choice}. \citet{lassig2011adaptive} presented a runtime analysis of a $(1+\mu)$ EA with an adaptive offspring size $\mu$ on several pseudo-Boolean functions. An analysis of how the running time of a $(\mu+1)$ EA on the Sphere function scales up with respect to the problem size $n$ appears in \citet{jagerskupper2005rigorous}. In \citet{jansen2005real} it has been shown that the running time of the $(\mu+1)$ EA with a crossover operator on the Real Royal Road function is polynomial on average, while that of an EA with mutation and selection only is exponentially large with an overwhelming probability. A rigorous run-time analysis of the $(\mu+1)$ EA on a specific pseudo-Boolean function has been carried out in \citet{witt2006runtime,witt2008population}.  A rigorous runtime analysis of selecting the population size with respect to the $(\mu+1)$ EA on several pseudo-Boolean functions appears in \citet{storch2008choice}. A runtime analysis of both $(1 + \mu)$ and $(\mu+1)$ EA on some instances of Vertex Covering Problems is provided in \citet{oliveto2009analysis}. A run-time analysis of $(\mu+1)$ EAs with diversity-preserving mechanisms on the Two-Max problem  has been implemented in \citet{friedrich2009analysis}. An upper bound on the number of generations it takes a $(\mu + \mu)$ EA to encounter an optimal solution for the first time on the two well-known unimodal problems, Leading-Ones and One-Max, has been obtained in \citep{chen2009new}. The effect of population size in evolutionary multi-objective optimization has been considered in \citet{giel2010effect}. It has been shown that only the population-based EA is successful, while all the other individual-based algorithms fail on a specified class of pseudo-Boolean functions.
Nonetheless, all of the available theoretical results are mainly restricted to several simple algorithms for tackling specific problems. In other words, the up-to-date knowledge is limited to case studies only \citep{oliveto2007time}.

In contrast with the previous investigations, the current paper aims at drawing general results that apply to all discrete optimisation problems. The study is based on the fundamental matrix of the Markov chain modelling an EA. Such an approach can be traced back to an early work on asymptotic convergence properties of EAs in \citet{fogel1994asymptotic} and it has been applied to analysing elitist EAs in \citet{he2003towards}.

\section{Evolutionary Algorithms, Absorbing Markov Chains and Population Scalability}
\label{secMarkov}
\subsection{Formalization of   EAs}
Without loss of generality, consider the problem of maximizing  a fitness function $f(x)$.
$$
  \max   f(x), \qquad  {x \in  D}, \quad \mbox{subject to  constraints},
$$
where $f(x)$ is a fitness function, and $D$ is its domain (a finite set). For instance, $D$ is the set of all Boolean formulas in the satisfiability problem \citep{zhou2009comparative}, or the set of all possible vertex covers in the vertex cover problem \citep{oliveto2009analysis}.
Arrange the values of the fitness function in the order from high to low, that are called \emph{fitness levels}.

To alleviate the complexity of theoretical analysis, suppose that all constraints in the above problem have been removed through a constraint handling method. Under this circumstance,  all solutions in $D$ are thought to be feasible. Practical and theoretical analysis of constraint handling in evolutionary computation may be found in \citep{coello2002theoretical,zhou2007runtime}, for instance.

We consider an EA that makes use of an extra archive for keeping the best found solution. The archive itself is not involved in generating a new population. The general design of a $(\mu+\mu)$  EA  with an archive is described in Algorithm~\ref{alg1}. In the description, $t$ denotes the generation counter. $\Phi_t=(\phi_{t,1}, \cdots, \phi_{t,\mu})$
represents the population at the $t^\text{th}$ generation, where $\phi_{t,1}, \cdots, \phi_{t,\mu}$ are $\mu$ individuals.  $\Phi_{t+1/2}$
denotes the  offspring population generated via mutation from the population $\Phi_t$.

 \begin{algorithm}
\caption{A $(\mu+\mu)$   EA with an Archive (where $\mu$ is the population size)} \label{alg1}
\begin{algorithmic}[1]
\STATE \textbf{input}: fitness function;
\STATE generation counter $ t\leftarrow 0$;
\STATE initialize $ \Phi_0$;
\STATE an archive   keeps the best solution in $\Phi_0$;
\WHILE{(no optimal solution is found)}
\STATE $\Phi_{t+1/2}\leftarrow$ each individual in $\Phi_t$ generates a  child by mutation;
\STATE evaluate the fitness of each individual in $\Phi_{t+1/2}$;
\STATE $\Phi_{t+1}\leftarrow$   selected from $ \Phi_t, \Phi_{t+1/2}$;
\STATE  update the archive  if the best solution in $\Phi_{t+1}$ is better than it;
\STATE $t\leftarrow t+1$;
\ENDWHILE \STATE \textbf{output}:  the maximum of the fitness function.
\end{algorithmic}
\end{algorithm}

The initial population  is selected at random in such a way that any possible population may be chosen with a positive probability. Rather general definitions of mutation and selection operators appear below.
 \begin{itemize}
\item A  \emph{mutation  operator} is represented via an $\mathcal{S}^{(1)}$ by $\mathcal{S}^{(1)}$ Markov transition probability matrix the entries of which are given as
$$P_M(x,y)=P(\phi_{t+1/2}=y \mid \phi_t =x), \quad x, y \in \mathcal{S}^{(1)}.$$
Here $P_M(x,y)$ denotes the probability of going from  $x$ to $y$. $\phi_t$  is an individual in  the $t^{\mathrm{th}}$ generation population and $\phi_{t+1/2}$ the child of $\phi_t$ after mutation. 
 $\mathcal{S}^{(1)} =D$ is called the \emph{space of individuals}. Individuals $\phi_t$ and $\phi_{t+1/2}$ represent random variables, while $x$ and $y$ denote their states. 

\item  A  \emph{selection operator} is represented via an $\mathcal{S}^{(\mu)} \times \mathcal{S}^{(\mu)}$ by $\mathcal{S}^{(\mu)}$ probability transition matrix, the entries of which are introduced below:
\begin{align*}
 P_S(X, Y; Z) = P(\Phi_{t+1}=Z \mid \Phi_t=X, \Phi_{t+1/2}=Y), \quad X, Y , Z \in  \mathcal{S}^{(\mu)}.
\end{align*}
Here  $P_S(X, Y; Z)$ represents the probability of selecting $\mu$ individuals from populations $X$  and  $Y$ (children of   $X$) and then forming the next parent population $Z$. $\Phi_t$  is the $t^{\mathrm{th}}$ generation population, $\Phi_{t+1/2}$ the population of children of $\Phi_t$ after mutation, and $\Phi_{t+1}$  is the $(t+1)^{\mathrm{th}}$ generation population. 
$\mathcal{S}^{(\mu)}$ is the Cartesian product $\prod^{\mu}_{i=1} \mathcal{S}^{(1)}$, called a \emph{space of populations} if $\mu \ge 2$. Populations $\Phi_t$, $\Phi_{t+1/2}$ and $\Phi_{t+1}$ are random variables, while  $ X,  Y, Z$ denotes their states in the space of populations.    Superscripts $^{(1)}$ and $^{(\mu)}$ are used to distinguish between the space of individuals and space of populations.

A natural requirement on the selection operator is that all the individuals in $Z$ must come from these in $X$ or in $Y$. If  $Z$  contains an individual that is neither an individual of $X$ nor of $Y$, then  the  probability of going from  $X$ and $Y$ to  $Z$ is $0$.
\end{itemize}

The stopping criterion is that the algorithm will terminate once an optimal solution is found. This criterion is assumed only for the sake of convenience of the theoretical analysis. Apparently, we can simply ignore what happens after an EA encounters an optimal solution. EAs considered in the paper are convergent. Starting from any initial population, an EA can find an optimal solution after a finite number of generations.

The mathematical framework introduced above incorporates a wide class of  EAs as it doesn't assume any implementation details.
Table~\ref{notation} summarizes  the special notation appearing in the paper.

\begin{table*}[ht]
\begin{tabular*}{1\textwidth}{ r  p{0.75\textwidth}  }
\hline
 $^{(1)}$ and $^{(\mu)}$& distinguish between a $(1+1)$ EA and a $(\mu+\mu)$ EA when necessary\\
 $\mathcal{S}^{(1)} $ & the  set   of individuals \\
 $\mathcal{S}^{(\mu)}$ &   the set of populations of size $\mu$, $=$ Cartesian product $\prod^{\mu}_{i=1} \mathcal{S}^{(1)}$ \\
 $x,y,z  $ &  individuals, also called states in $\mathcal{S}^{(1)}$  \\
  $X,  Y,Z  $&    populations, also called states in $\mathcal{S}^{(\mu)}$     \\
   $\mathcal{S}^{(1)}_{\opt} $ & the set of  individuals that are optimal\\
  $\mathcal{S}^{(1)}_{\non}  $   &  the set of individuals that are not optimal  \\
 $\mathcal{S}^{(\mu)}_{\mathrm{opt}} $ & the set of  populations that contain at least one optimal individual \\
  $\mathcal{S}^{(\mu)}_{\mathrm{\non}}$ & the set of populations that contain no optimal individual \\
  $\mathcal{S}^{(\mu)}_{\same}(x)$ & the set of populations the best individual of which is $x$\\
  $\mathcal{S}^{(\mu)}_{\high}(x)$& the set of populations with best individual's fitness $>f(x)$\\
  $\mathcal{S}^{(\mu)}_{\bridge}(x)$& the set of populations that contain $x$'s bridgeable point\\
  $P(X, \mathcal{S}^{(\mu)}_{\set}(x))$ & the probability of going from $X$ to the set $\mathcal{S}^{(\mu)}_{\set}(x)$\\
$x_{\rho}$ & $=\arg\max P(x,x)$ among all non-optimal states $x \in \mathcal{S}^{(1)}_{\non}$ \\
  $\Phi_t$  & the population at the $t^{\text{th}}$ generation \\
  $\Phi_{t+1/2}$  & the offspring population of $\Phi_t$ after mutation \\
 $q_t(X)$ & the probability that  $\Phi_t=X$ \\
 $\mathbf{q}_t$ & the vector to represent the  probabilities of $\Phi_t$ in all non-optimal states\\
 $m(X)$ & the expected number of generations needed to find an optimal solution when starting from $X$ \\
  $\mathbf{Q}_{x,x}$ & the transition probability submatrix within the set $\mathcal{S}^{(\mu)}_{\same}(x)$\\
\hline
\end{tabular*}
\caption{Notation Table}
\label{notation}
\end{table*}
\subsection{Absorbing Markov Chains and their Fundamental Matrices}
\label{sec-markov}
The discussion in the subsection follows the general analytic framework of the absorbing Markov chains for analysing EAs  initiated in \citet{he2003towards}.
According to the stopping criterion, an EA halts once an optimal solution is found. So if $\Phi_t = X$ is an optimal state, we let
$\Phi_{t+1}= \Phi_{t+2} =\cdots$ $= X$   for all the future states.
Thus the sequence $\{\Phi_t, t=0, 1,\cdots\}$ can be modelled by an absorbing Markov chain \footnote{An absorbing Markov chain is a Markov chain where starting from every state, the chain can reach an absorbing state.  An absorbing state is a state from which it is impossible to leave\citep[p.416]{grinstead1997introduction}.}.
Let $\mathbf{P}$ be its transition matrix, having   entries
$$P(X,Y)=P(\Phi_{t+1}= Y \mid  \Phi_t = X), \quad X,Y \in \mathcal{S}.$$
Individuals or populations are called states when we speak about the corresponding Markov chains. For the Markov chains modelling convergent EAs, an optimal state is always an absorbing state while a non-optimal state is always a transient state.

The probability of going from $X$ to a  set $\mathcal{S}_{\set}$ is denoted by
\begin{align*}
P(X,\mathcal{S}_{\set})&=P(\Phi_{t+1} \in \mathcal{S}_{\set} \mid  \Phi_t = X).
\end{align*}

The transition matrix of an absorbing Markov chain $\mathbf{P}$ can be written in the following canonical form.
\begin{equation}
\label{equTransitionMatrix} \mathbf{P} =  \begin{pmatrix}
  \mathbf{I} & \mathbf{{O}} \\
   *  & \mathbf{Q}
 \end{pmatrix},
\end{equation}
where $\mathbf{I}$ is the identity  matrix indicating transitions within the optimal set, and $\mathbf{O}$ denotes the zero matrix (apparently representing transitions from optimal to non-optimal states, as discussed above). The matrix $ \mathbf{Q}$ denotes transitions within non-optimal  states.
The part $*$ represents transitions from the non-optimal states to the optimal ones.

 Perhaps the most elegant part of the theory of absorbing Markov chains revolves around the notion of a fundamental matrix.
\begin{definition}\citep[Definition 11.3]{grinstead1997introduction}
  The matrix $\mathbf{N}=(\mathbf{I}-\mathbf{Q})^{-1}$ is called the \emph{fundamental matrix}, where its entry  $N(X,Y)$ gives the expected number of visits to a transient state $X$ starting from a transient state $Y$  before being absorbed.
\end{definition}

The expected number of visits  has a direct link to the
the expected number of generations needed to encounter an optimal solution for the first time.

\begin{lemma}\label{lemFundMatrix}
\citep[Theorem 11.5]{grinstead1997introduction}
Let $m(X)$ denote the expected number of generations needed to encounter an optimal solution for the first time when starting from a transient state $X$.  Then
\begin{equation}
m(X) = \sum_{Y \in \mathcal{S}_{\non}} N(X,Y).
\end{equation}
\end{lemma}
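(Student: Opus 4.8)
The plan is to establish the matrix identity $\mathbf{m} = \mathbf{N}\mathbf{1}$, where $\mathbf{m} = (m(X))_{X \in \mathcal{S}_{\non}}$ is the column vector of expected absorption times and $\mathbf{1}$ is the all-ones column vector; reading off the $X$-th coordinate then yields $m(X) = \sum_{Y \in \mathcal{S}_{\non}} N(X,Y)$. I would derive this identity by a first-step (one-generation) analysis of the Markov chain $\{\Phi_t\}$ modelling the EA.

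First I would condition on the outcome of a single generation started from a transient state $X$. Exactly one generation is consumed, after which the chain is either absorbed (if it lands in the optimal set $\mathcal{S}_{\opt}$, contributing no further generations) or sits at some transient state $Y$, from which a further $m(Y)$ generations are expected on average. By the Markov property and the law of total expectation this gives the recurrence
\[
m(X) = 1 + \sum_{Y \in \mathcal{S}_{\non}} P(X,Y)\, m(Y), \qquad X \in \mathcal{S}_{\non},
\]
where the $P(X,Y)$ are precisely the entries of the transient block $\mathbf{Q}$ of the canonical form in equation~\eqref{equTransitionMatrix}. Writing this system in matrix form yields $\mathbf{m} = \mathbf{1} + \mathbf{Q}\mathbf{m}$, i.e. $(\mathbf{I} - \mathbf{Q})\mathbf{m} = \mathbf{1}$. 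Multiplying on the left by $\mathbf{N} = (\mathbf{I}-\mathbf{Q})^{-1}$ then gives $\mathbf{m} = \mathbf{N}\mathbf{1}$, which is the claim.

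The main obstacle is the justification that $\mathbf{I} - \mathbf{Q}$ is invertible, so that the recurrence has a unique finite solution and the step above is legitimate; this hinges on $\rho(\mathbf{Q}) < 1$. I would argue this directly from the convergence hypothesis on the EA: because the state space is finite and from every transient state an absorbing state is reached in finitely many generations with probability one, there exist $N_0$ and $\varepsilon > 0$ such that $\sum_{Y \in \mathcal{S}_{\non}} Q^{N_0}(X,Y) \le 1 - \varepsilon$ for every transient $X$ (the left side is just the probability of not yet being absorbed by generation $N_0$). Hence the maximum row sum of $\mathbf{Q}^{N_0}$ is below $1$, so $\rho(\mathbf{Q})^{N_0} = \rho(\mathbf{Q}^{N_0}) \le 1-\varepsilon$, giving $\rho(\mathbf{Q}) < 1$; consequently $\mathbf{Q}^{t}\to\mathbf{0}$ and the Neumann series $\sum_{t \ge 0}\mathbf{Q}^{t}$ converges to $(\mathbf{I}-\mathbf{Q})^{-1}$.

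As an independent cross-check I would re-derive the formula probabilistically, avoiding the recurrence altogether: writing the total number of generations before absorption as the total occupation time of the transient states,
\[
m(X) = \mathbb{E}_X\Big[\textstyle\sum_{t \ge 0}\sum_{Y \in \mathcal{S}_{\non}}\mathbf{1}\{\Phi_t = Y\}\Big] = \sum_{Y \in \mathcal{S}_{\non}}\sum_{t\ge 0} Q^{t}(X,Y) = \sum_{Y \in \mathcal{S}_{\non}} N(X,Y),
\]
where the interchange of expectation and summation is justified by nonnegativity (Tonelli) and the last equality again uses $\sum_{t \ge 0}\mathbf{Q}^{t} = \mathbf{N}$. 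This route makes transparent that $\sum_{Y} N(X,Y)$ is literally the expected number of visits to the transient block, i.e. the expected number of generations to first reach $\mathcal{S}_{\opt}$.
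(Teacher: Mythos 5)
The paper does not actually prove this lemma---it is quoted directly from Grinstead and Snell's Theorem 11.5---and your argument is precisely the standard proof of that cited theorem: a first-step analysis giving $(\mathbf{I}-\mathbf{Q})\mathbf{m}=\mathbf{1}$, invertibility of $\mathbf{I}-\mathbf{Q}$ via $\rho(\mathbf{Q})<1$ (correctly derived from the finite absorbing-chain hypothesis), and the occupation-time identity $\mathbf{N}=\sum_{t\ge 0}\mathbf{Q}^t$, so your proposal is correct, complete, and takes the same route the paper implicitly relies on. The only cosmetic point worth noting is that you read $N(X,Y)$ as the expected number of visits to $Y$ starting from $X$, which is what the lemma's formula requires, whereas the paper's Definition~1 states the two arguments in the opposite order.
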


In the paper we consider two special values of the expected number of generations it takes to reach an optimum solution.
The first one is the maximum value of the expected number of generations, given as
\begin{align}
\max \{ m(X); X \in \mathcal{S} \} =\max \{\sum_{Y \in \mathcal{S}_{\non}} N(X,Y); X \in \mathcal{S} \} .
\end{align}
The above value is the supreme of the expected number of generations it takes to reach an optimal solution among all possible initializations.

The second one is the average value of the expected number of generations it takes to reach an optimum over all of the transient states, given as
\begin{align}
\frac{1}{\mid \mathcal{S}_{\non}\mid} \sum_{X \in \mathcal{S}_{\non}} m(X)=\frac{1}{\mid \mathcal{S}_{\non}\mid} \sum_{X,Y \in \mathcal{S}_{\non}} N(X,Y),
\end{align}
where $\mid \mathcal{S}_{\non}\mid$ denotes the cardinality of the set of transient states. The above value is achieved when the initial population is chosen uniformly at random from the set of all transient states. Analogously, an alternative notion of the expected number of generations it takes to encounter an optimal individual for the first time when any population (not necessarily one containing no optimal individual) is selected uniformly at random is given as follows:
\begin{align}
\frac{1}{\mid \mathcal{S} \mid} \sum_{X \in \mathcal{S}} m(X).
\end{align}

\subsection{Average Convergence Rate}
In this subsection, we introduce the notion of the average convergence rate. The convergence rate of an EA measures how fast  $\Phi_t$ converges to the optimal set per generation. In EAs, it is formally defined by the conditional probability of an EA converging to the optimal set, that is
\begin{align*}
 P(\Phi_{t} \in \mathcal{S}_{\opt} \mid \Phi_{t-1} \in \mathcal{S}_{\non})
=  1-  P(\Phi_{t} \in \mathcal{S}_{\non} \mid \Phi_{t-1} \in \mathcal{S}_{\non}).
\end{align*}
We denote the probability that $\Phi_t$ is at a transient state $X$  by
$$
q_t(X)=P(\Phi_t=X).
$$
Write all transient states in a vector form~\footnote{$\mathbf{v}$ represents a column vector and the transpose notation, $\mathbf{v}^T$, allows to represent this column vector in the form of a row vector. The vector $\mathbf{v}$ is also denoted by $[v(X)]$ and its entry $v(X)$ by $[\mathbf{v}]_X$.}: $(X_1, X_2, \cdots   )^T$.   Then  the vector
$$\mathbf{q}_t= (q_t(X_1), q_t(X_2), \cdots )^T $$ denotes the probabilities of  $\Phi_t$ in   all transient states.
The corresponding Markov chain is then represented by a matrix iteration:  
\begin{align}
\mathbf{q}^T_{t+1}=  \mathbf{q}^T_t \mathbf{Q},  \mbox{  or  }\mathbf{q}_{t+1}=   \mathbf{Q}^T \mathbf{q}_t.
\end{align}

Since the EA is initialized at random so that every possible population is selected with a positive probability (for instance, uniformly at random), each state can be chosen as an initial state  with a positive probability. This means  that $q_0(X) > 0$ for all $X \in \mathcal{S}_{\non}$.  We denote this by $\mathbf{q}_0 >0$. 

Write the probability of $\Phi_t$ being in the transient set in the $1$-norm for~\footnote{For a vector $\mathbf{v}$,
$\parallel \mathbf{v}  \parallel_1=\sum_{i=1}^n \mid v_i\mid $. For a square matrix $\mathbf{A}=[A_{ij}]$, $\parallel \mathbf{A} \parallel_1$= $\max_{1 \le j \le n} \sum^n_{i=1} |A_{ij} | $.  }.
$$ \parallel \mathbf{q}_t  \parallel_1 = P(  \Phi_{t} \in \mathcal{S}_{\non}).
$$ 

Since the EA is initialized at random, $\parallel \mathbf{q}_t  \parallel_1 = P(  \Phi_{t} \in \mathcal{S}_{\non})>0$. 

Since $P(\Phi_{t-1} \in \mathcal{S}_{\opt},\Phi_{t} \in \mathcal{S}_{\non})=0$,
the conditional probability of staying within the non-optimal set in the $t^{\mathrm{th}}$ generation is
\begin{align*}
&  P(\Phi_{t} \in \mathcal{S}_{\non} \mid \Phi_{t-1} \in \mathcal{S}_{\non})\\& = \frac{P(\Phi_{t-1} \in \mathcal{S}_{\non},  \Phi_{t} \in \mathcal{S}_{\non})}{P(\Phi_{t-1} \in \mathcal{S}_{\non})}\\ &= \frac{P(\Phi_{t-1} \in \mathcal{S}_{\opt},\Phi_{t} \in \mathcal{S}_{\non})+P(\Phi_{t-1} \in \mathcal{S}_{\non},\Phi_{t} \in \mathcal{S}_{\non})}{P(\Phi_{t-1} \in \mathcal{S}_{\non})}\\
&= \frac{P( \Phi_{t} \in \mathcal{S}_{\non})}{P(\Phi_{t-1} \in \mathcal{S}_{\non})} \\
&= \frac{\parallel \mathbf{q}_{t} \parallel_1}{\parallel \mathbf{q}_{t-1} \parallel_1} .
\end{align*}

Thus, the geometric mean of the conditional probabilities of staying within the non-optimal set for $t$ generations is
\begin{align}
  \left(\prod^{t}_{s=1} P(\Phi_{s} \in \mathcal{S}_{\non} \mid \Phi_{s-1} \in \mathcal{S}_{\non} )\right)^{1/t} = \left(   \frac{\parallel \mathbf{q}_{t} \parallel_1}{\parallel \mathbf{q}_{0} \parallel_1} \right)^{1/t}. \label{eqnGeometicMean}
\end{align} 

Next we define the average convergence rate for $t$ generations based on the geometric mean above.
\begin{definition}
The \emph{average rate of  convergence to the optimal set for $t$ generations} is
\begin{align*}
  1- \left(\prod^{t}_{s=1} P(\Phi_{s} \in \mathcal{S}_{\non} \mid \Phi_{s-1} \in \mathcal{S}_{\non} )\right)^{1/t} .
\end{align*}  
\end{definition} 
Now we consider the limit of the rate as $t$ increases towards $+\infty$. 

\begin{lemma}\label{rateLemmaIllustrate}
The average rate of convergence to the optimal set for $t$ generations satisfies
\begin{align*}
1-\lim_{t \to +\infty}  \left(\prod^{t}_{s=1} P(\Phi_{s} \in \mathcal{S}_{\non} \mid \Phi_{s-1} \in \mathcal{S}_{\non} )\right)^{1/t}=
1-\rho(\mathbf{Q}) .
\end{align*}
\end{lemma}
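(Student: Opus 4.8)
The plan is to build directly on the identity already derived in equation~\eqref{eqnGeometicMean}, namely
$$
\left(\prod^{t}_{s=1} P(\Phi_{s} \in \mathcal{S}_{\non} \mid \Phi_{s-1} \in \mathcal{S}_{\non} )\right)^{1/t} = \left(\frac{\parallel \mathbf{q}_{t} \parallel_1}{\parallel \mathbf{q}_{0} \parallel_1}\right)^{1/t}.
$$
Since $\parallel \mathbf{q}_{0} \parallel_1$ is a fixed positive constant, $(\parallel \mathbf{q}_{0} \parallel_1)^{-1/t}\to 1$, so it suffices to prove $\lim_{t\to+\infty}\parallel \mathbf{q}_{t}\parallel_1^{1/t}=\rho(\mathbf{Q})$. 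Recalling the matrix iteration $\mathbf{q}_{t}=(\mathbf{Q}^T)^t\mathbf{q}_0$, and that a matrix and its transpose share the same spectrum so $\rho(\mathbf{Q}^T)=\rho(\mathbf{Q})$, the entire claim reduces to establishing $\lim_{t\to+\infty}\parallel(\mathbf{Q}^T)^t\mathbf{q}_0\parallel_1^{1/t}=\rho(\mathbf{Q}^T)$. The central analytic tool is Gelfand's formula, $\rho(\mathbf{A})=\lim_{t\to+\infty}\parallel\mathbf{A}^t\parallel_1^{1/t}$, valid for the induced $1$-norm defined in the footnote.

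For the upper bound I would invoke submultiplicativity of the induced norm, $\parallel(\mathbf{Q}^T)^t\mathbf{q}_0\parallel_1\le\parallel(\mathbf{Q}^T)^t\parallel_1\,\parallel\mathbf{q}_0\parallel_1$. Taking $t$-th roots, letting $t\to+\infty$, and applying Gelfand's formula together with $\parallel\mathbf{q}_0\parallel_1^{1/t}\to 1$ yields $\limsup_{t}\parallel(\mathbf{Q}^T)^t\mathbf{q}_0\parallel_1^{1/t}\le\rho(\mathbf{Q}^T)$. This half is routine.

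The more delicate half is the matching lower bound, and this is where I expect the real work to lie. The subtlety is that Gelfand's formula controls the operator norm $\parallel(\mathbf{Q}^T)^t\parallel_1$, whereas we only have the vector $(\mathbf{Q}^T)^t\mathbf{q}_0$; for a generic initial distribution the growth rate could in principle fall below $\rho$ if $\mathbf{q}_0$ failed to excite the dominant eigendirection. The key is to exploit the hypothesis $\mathbf{q}_0>0$ established earlier, i.e.\ every transient state is initialized with positive probability. Setting $c=\min_{X}q_0(X)>0$, we have $\mathbf{q}_0\ge c\,\mathbf{e}_j$ entrywise for every standard basis vector $\mathbf{e}_j$. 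Since $\mathbf{Q}^T$ is entrywise nonnegative, it is monotone for the entrywise order, so $(\mathbf{Q}^T)^t\mathbf{q}_0\ge c\,(\mathbf{Q}^T)^t\mathbf{e}_j\ge 0$; because the $1$-norm is monotone on nonnegative vectors, $\parallel(\mathbf{Q}^T)^t\mathbf{q}_0\parallel_1\ge c\,\parallel(\mathbf{Q}^T)^t\mathbf{e}_j\parallel_1$ for every $j$. Now $\parallel(\mathbf{Q}^T)^t\mathbf{e}_j\parallel_1$ is exactly the $j$-th column sum of $(\mathbf{Q}^T)^t$, and the induced $1$-norm equals the maximal column sum; choosing the maximizing index $j=j(t)$ gives $\parallel(\mathbf{Q}^T)^t\mathbf{q}_0\parallel_1\ge c\,\parallel(\mathbf{Q}^T)^t\parallel_1$.

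Taking $t$-th roots, sending $t\to+\infty$, and using $c^{1/t}\to 1$ with Gelfand's formula once more yields $\liminf_{t}\parallel(\mathbf{Q}^T)^t\mathbf{q}_0\parallel_1^{1/t}\ge\rho(\mathbf{Q}^T)$. Combining the two bounds squeezes the limit to $\rho(\mathbf{Q}^T)=\rho(\mathbf{Q})$, and substituting back through equation~\eqref{eqnGeometicMean} gives the stated average convergence rate $1-\rho(\mathbf{Q})$, completing the argument. The positivity condition $\mathbf{q}_0>0$ is doing essential work here, and it is worth flagging explicitly that without it the lower bound can genuinely fail.
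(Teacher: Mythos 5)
Your argument is correct, and while your upper bound (submultiplicativity of the induced $1$-norm followed by Gelfand's formula) coincides with step (1) of the paper's proof, your lower bound takes a genuinely different route. The paper's step (2) invokes the Perron--Frobenius theorem to produce a nonnegative eigenvector $\mathbf{v}$ of $\mathbf{Q}^T$ for the eigenvalue $\rho(\mathbf{Q})$, normalizes it so that $\mathbf{q}_0=\mathbf{v}+\mathbf{w}$ with $\mathbf{w}\ge 0$, and then uses $\mathbf{q}_t\ge(\mathbf{Q}^T)^t\mathbf{v}=(\rho(\mathbf{Q}))^t\mathbf{v}$. You instead dominate $\mathbf{q}_0\ge c\,\mathbf{e}_j$ for every standard basis vector, note that $\parallel(\mathbf{Q}^T)^t\mathbf{e}_j\parallel_1$ is the $j$-th column sum of $(\mathbf{Q}^T)^t$ while the induced $1$-norm is the maximal column sum, and thereby obtain $\parallel\mathbf{q}_t\parallel_1\ge c\parallel(\mathbf{Q}^T)^t\parallel_1$, after which Gelfand's formula is applied a second time. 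Your route is more elementary---it uses only the entrywise nonnegativity of $\mathbf{Q}$ and the column-sum description of the induced $1$-norm, with no spectral theory of nonnegative matrices---and it makes completely transparent where the hypothesis $\mathbf{q}_0>0$ enters; it also sandwiches $\parallel\mathbf{q}_t\parallel_1$ between $c\parallel(\mathbf{Q}^T)^t\parallel_1$ and $\parallel\mathbf{q}_0\parallel_1\parallel(\mathbf{Q}^T)^t\parallel_1$, a clean two-sided estimate in its own right. What the paper's Perron--Frobenius argument buys is an explicit eigendirection along which the decay is exactly geometric at rate $\rho(\mathbf{Q})$ at every finite $t$, which matches the ``dominant mode'' intuition; as a proof of the limit, however, your version is at least as rigorous, and slightly more careful in that it separates the $\limsup$ and $\liminf$ bounds and squeezes, rather than tacitly assuming the limit exists.
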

\begin{proof}
(1) From the matrix iteration
  $\mathbf{q}_t= \mathbf{Q}^T   \mathbf{q}_{t-1} $, we get
\begin{align*}
\parallel \mathbf{q}_t \parallel_1 \le  \parallel (\mathbf{Q}^T)^t \parallel_1 \parallel \mathbf{q}_0 \parallel_1.
\end{align*}

Since  the EA is initialized at random,  and $\mathbf{q}_0 > \mathbf{0}$,  the average  rate of convergence to the optimal set after $t$ iterations satisfies
\begin{align*}
1-\left(\frac{\parallel \mathbf{q}_t \parallel_1}{\parallel \mathbf{q}_0 \parallel_1} \right)^{1/t} \ge 1- \parallel (\mathbf{Q}^T)^t \parallel_1^{1/t}.
\end{align*}
According to the Gelfand's spectral radius formula~\footnote{Gelfand's spectral radius formula says that for any induced matrix norm $\parallel \mathbf{A} \parallel$, its spectral radius $\rho(\mathbf{A})=\lim_{t \to \infty}\parallel \mathbf{A}^t \parallel^{1/t}$   \citep[p.619]{meyer2000matrix}.}, as $t \to +\infty$,
\begin{align}
\label{equLowerBound}
1-\lim_{t \to +\infty} \left(\frac{\parallel \mathbf{q}_t \parallel_1}{\parallel \mathbf{q}_0 \parallel_1} \right)^{1/t} \ge 1-\lim_{t \to +\infty} \parallel (\mathbf{Q}^T)^t \parallel_1^{1/t}=1-\rho(\mathbf{Q}^T)=1-\rho(\mathbf{Q}).
\end{align}

(2) Since  $\mathbf{Q}\ge 0$, according to the Perron-Frobenius theorems \citep[p.670]{meyer2000matrix}, $\rho(\mathbf{Q})$ is an eigenvalue of $\mathbf{Q}$ (and also $\mathbf{Q}^T$) and the corresponding eigenvector $\mathbf{v}\ge 0$. In particular,
\begin{align*}
\rho(\mathbf{Q}) \mathbf{v} =  \mathbf{Q}^T \mathbf{v}.
\end{align*}

Let $\min \mathbf{q}_0$ denote the minimum value of all the entries in the vector $\mathbf{q}_0$. Since $\mathbf{q}_0 >0$, $\min \mathbf{q}_0 >0$ as well.  Normalize the vector $\mathbf{v}$ so that~\footnote{For a vector $\mathbf{v}$,
$\parallel \mathbf{v}  \parallel_{\infty}=\max_{i=1, \cdots,n} \mid v_i\mid $.} $\parallel \mathbf{v} \parallel_{\infty} =\min \mathbf{q}_0 $. We split $  \mathbf{q}_0$ into two parts,
\begin{align*}
\mathbf{q}_0 =\mathbf{v}+\mathbf{w},
\end{align*}
where
$\mathbf{w}\ge 0$.
Thus, since $\mathbf{w}\ge 0$ and $\mathbf{Q} \ge 0$, we deduce that
\begin{align*}
  \mathbf{q}_t =\mathbf{Q}^T \mathbf{q}_{t-1} =  (\mathbf{Q}^T)^t \mathbf{q}_0 = (\mathbf{Q}^T)^t (\mathbf{v}+\mathbf{w}) \ge  (\mathbf{Q}^T)^t  \mathbf{v}=(\rho(\mathbf{Q}))^t \mathbf{v}.
\end{align*}
It follows then that
\begin{align*}
\left(\frac{\parallel \mathbf{q}_t \parallel_1}{\parallel \mathbf{q}_0 \parallel_1} \right)^{1/t}  \ge \rho(\mathbf{Q}) \left(\frac{\parallel \mathbf{v}  \parallel_1}{\parallel \mathbf{q}_0 \parallel_1} \right)^{1/t} \to \rho(\mathbf{Q})
\end{align*}
as $t \rightarrow + \infty$. The inequality is equivalent to the one below:
\begin{align*}
1- \lim_{t \to +\infty }\left(\frac{\parallel \mathbf{q}_t \parallel_1}{\parallel \mathbf{q}_0 \parallel_1} \right)^{1/t} \le 1- \rho(\mathbf{Q}).
\end{align*}

The desired conclusion follows by combing the inequality above with   Inequality~(\ref{equLowerBound}) and Equality (\ref{eqnGeometicMean}).
\end{proof}

\begin{definition}
The \emph{average rate of convergence to the optimal set} is
\begin{align*}
 1-\lim_{t \to +\infty}  \left(\prod^{t}_{s=1} P(\Phi_{s} \in \mathcal{S}_{\non} \mid \Phi_{s-1} \in \mathcal{S}_{\non} )\right)^{1/t} = 1-\rho(\mathbf{Q}) .
\end{align*}
\end{definition}
The ``average'' is the geometric mean that is taken over all generations under the condition of randomized initialization~\footnote{The rate is similar to another average rate of convergence   based on the logarithmic mean~\citep[p.73]{varga2009matrix},   $-\ln \rho(\mathbf{Q})$. The difference between the two notions of rate is not particularly significant~\citep{he2012pure}} .

\subsection{The Spectral Radius  of the Fundamental Matrix}
Given a $(1+1)$ EA and a $(\mu+\mu)$ EA (where $\mu\ge 2$), it is impossible to make an ``entrywise'' comparison of  their fundamental matrices (the expected number of visits) since their dimensions are different.  Instead, the comparison  should be based on a  measure of all entries.  The spectral radius  or matrix norms of the fundamental matrix thus play the role of such a measure.

In this subsection we discuss the spectral radius of the fundamental matrix. First we show that $1/\rho(\mathbf{N})$ equals the average convergence rate. This can be seen from the following lemma.

\begin{lemma}
\label{lemRadiiRelation}
The spectral radii of the transition probability submatrix  $\mathbf{Q}$ and the fundamental matrix $\mathbf{N}$ are related as follows:
\begin{equation}
 \rho(\mathbf{N}) = ({1-\rho(\mathbf{Q})})^{-1}.
\end{equation}
\end{lemma}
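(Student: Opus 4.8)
The plan is to reduce everything to the spectral mapping theorem and then a one-line triangle-inequality estimate, using the Perron--Frobenius structure that the earlier lemmas have already exploited. The central observation is that the eigenvalues of $\mathbf{N}=(\mathbf{I}-\mathbf{Q})^{-1}$ are exactly the numbers $(1-\lambda)^{-1}$ as $\lambda$ ranges over the eigenvalues of $\mathbf{Q}$, so computing $\rho(\mathbf{N})$ amounts to locating which eigenvalue of $\mathbf{Q}$ maximizes $|1-\lambda|^{-1}$.

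First I would record the structural facts already available. The submatrix $\mathbf{Q}$ is nonnegative, and since the chain is absorbing the iterates $\mathbf{Q}^t$ tend to $\mathbf{0}$, which forces $\rho(\mathbf{Q})<1$; consequently $\mathbf{I}-\mathbf{Q}$ is invertible and $\mathbf{N}$ is well defined. By the Perron--Frobenius theorem (already invoked in the proof of Lemma~\ref{rateLemmaIllustrate}), $\rho(\mathbf{Q})$ is itself an eigenvalue of $\mathbf{Q}$ and is real with $0\le\rho(\mathbf{Q})<1$. Next I would apply the spectral mapping theorem to the rational function $g(z)=(1-z)^{-1}$, which is analytic on a neighbourhood of the spectrum of $\mathbf{Q}$ because every eigenvalue $\lambda$ satisfies $|\lambda|\le\rho(\mathbf{Q})<1$ and hence avoids the pole at $z=1$. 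This yields
\begin{align*}
\rho(\mathbf{N}) = \max\{\,|1-\lambda|^{-1} : \lambda \text{ is an eigenvalue of } \mathbf{Q}\,\}.
\end{align*}

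The remaining step is the estimate that pins the maximum at the Perron root. For any eigenvalue $\lambda$ of $\mathbf{Q}$, the triangle inequality gives $|1-\lambda|\ge 1-|\lambda|\ge 1-\rho(\mathbf{Q})>0$, so $|1-\lambda|^{-1}\le(1-\rho(\mathbf{Q}))^{-1}$. Taking $\lambda=\rho(\mathbf{Q})$, which is a genuine eigenvalue and is real, turns both inequalities into equalities, so the bound is attained. Combining this with the displayed formula for $\rho(\mathbf{N})$ gives $\rho(\mathbf{N})=(1-\rho(\mathbf{Q}))^{-1}$, as claimed.

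The step I expect to carry the real content is the last one, namely verifying that the outer maximum is achieved at the real Perron eigenvalue rather than at some complex eigenvalue of $\mathbf{Q}$. Geometrically this is the statement that on the disk $\{|z|\le\rho(\mathbf{Q})\}$ the modulus of $z\mapsto(1-z)^{-1}$ is largest at the point $z=\rho(\mathbf{Q})$ closest to the singularity at $z=1$; the triangle inequality makes this rigorous, but it relies essentially on two facts supplied by Perron--Frobenius, that $\rho(\mathbf{Q})$ is a real eigenvalue and that it dominates the moduli of all the others. An alternative route that avoids the spectral mapping theorem would be to use the Neumann series $\mathbf{N}=\sum_{k\ge0}\mathbf{Q}^k$ together with Gelfand's formula, but the eigenvalue argument above is shorter and more transparent.
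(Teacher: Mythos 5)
Your proposal is correct and follows essentially the same route as the paper: both use the fact that the eigenvalues of $\mathbf{N}$ are exactly $(1-\lambda)^{-1}$ for $\lambda$ an eigenvalue of $\mathbf{Q}$, invoke Perron--Frobenius to get that $\rho(\mathbf{Q})$ is a real eigenvalue dominating all others in modulus, and close with the chain $|1-\lambda|^{-1}\le(1-|\lambda|)^{-1}\le(1-\rho(\mathbf{Q}))^{-1}$, which is precisely the paper's displayed inequality. Your added remarks that $\rho(\mathbf{Q})<1$ (so $\mathbf{N}$ is well defined) and that the maximum is attained at the real Perron root merely make explicit what the paper leaves implicit.
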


\begin{proof}
From the definition of the fundamental matrix, it follows that $\lambda$ is an eigenvalue of $\mathbf{Q}$ if and only if $(1-\lambda)^{-1}$ is an eigenvalue of $\mathbf{N}$.

Since $\mathbf{Q}$ is non-negative, according to Perron-Frobenius Theorems \citep[p.670]{meyer2000matrix}, $\rho(\mathbf{Q})$ is an eigenvalue of $\mathbf{Q}$ such that
$$
\rho(\mathbf{Q}) \ge \mid \lambda\mid,
$$
where $\lambda$ is any eigenvalue of $\mathbf{Q}$.

On the other hand, $({1-\rho(\mathbf{Q})})^{-1}$ is an eigenvalue of $\mathbf{N}$ and satisfies
\begin{align*}
&\frac{1}{1-\rho(\mathbf{Q})} \ge \frac{1}{1-\mid \lambda \mid } \ge \frac{1}{\mid 1-\lambda \mid},&
\end{align*}
so that $({1-\rho(\mathbf{Q})})^{-1}$ is the spectral radius of $\mathbf{N}$.
\end{proof}

Next the lemma below  shows that $\rho(\mathbf{N})$ is a ``max-min'' value related to the expected number of visits to   transient states. 
\begin{lemma}
\label{lemMaxMin}
The  spectral radius of the fundamental matrix equals 
\begin{align}
\rho(\mathbf{N})&=\max_{\mathbf{q}_0 }\,\,\,\min_{Y: q_0(Y) \neq 0}\frac{ \sum_{X \in \mathcal{S}_{\non}} N(Y,X)q_0(X)}{q_0(Y) },
\end{align}where  $q_0(X)=P(\Phi_0=X)$ is the probability that the initial population $\Phi_0$ is at the state $X$ and $N(Y,X)$ is the expected number of visits to the state $X$ starting from the state $Y$. 
\end{lemma}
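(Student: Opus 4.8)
The plan is to recognize this identity as the Collatz--Wielandt characterization of the Perron root of the nonnegative matrix $\mathbf{N}$, and to prove the two inequalities separately. I would write $r = \rho(\mathbf{N})$ and, for a nonnegative vector $\mathbf{q}_0 \neq \mathbf{0}$, set $c(\mathbf{q}_0) = \min_{Y: q_0(Y) \neq 0} \frac{\sum_{X\in\mathcal{S}_{\non}} N(Y,X) q_0(X)}{q_0(Y)}$, observing that $\sum_{X} N(Y,X) q_0(X)$ is exactly the $Y$-th coordinate of $\mathbf{N}\mathbf{q}_0$. Since $\mathbf{N} = (\mathbf{I}-\mathbf{Q})^{-1} = \sum_{k\ge 0} \mathbf{Q}^k$ with $\mathbf{Q}\ge 0$, the matrix $\mathbf{N}$ is nonnegative (indeed $N(Y,Y) \ge 1$), so every quantity is well defined and $c(\mathbf{q}_0) \ge 1 > 0$.

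To establish the upper bound $c(\mathbf{q}_0) \le r$, I would note that the definition of $c(\mathbf{q}_0)$ gives $[\mathbf{N}\mathbf{q}_0]_Y \ge c(\mathbf{q}_0)\, q_0(Y)$ whenever $q_0(Y) \neq 0$, and this also holds trivially when $q_0(Y) = 0$ because $\mathbf{N}\mathbf{q}_0 \ge 0$; hence $\mathbf{N}\mathbf{q}_0 \ge c(\mathbf{q}_0)\mathbf{q}_0$ entrywise. Multiplying repeatedly by the nonnegative matrix $\mathbf{N}$ preserves this inequality, so by induction $\mathbf{N}^k \mathbf{q}_0 \ge c(\mathbf{q}_0)^k \mathbf{q}_0 \ge 0$ for all $k$. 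Taking the $1$-norm and using $\|\mathbf{N}^k\|_1 \|\mathbf{q}_0\|_1 \ge \|\mathbf{N}^k\mathbf{q}_0\|_1 \ge c(\mathbf{q}_0)^k\|\mathbf{q}_0\|_1$ yields $\|\mathbf{N}^k\|_1^{1/k} \ge c(\mathbf{q}_0)$, and letting $k\to\infty$ gives $r = \lim_k \|\mathbf{N}^k\|_1^{1/k} \ge c(\mathbf{q}_0)$ by Gelfand's spectral radius formula. As $\mathbf{q}_0$ is arbitrary, $\max_{\mathbf{q}_0} c(\mathbf{q}_0) \le r$.

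For the matching lower bound I would exhibit a maximiser. By the Perron--Frobenius theorem the nonnegative matrix $\mathbf{Q}$ has $\rho(\mathbf{Q})$ as an eigenvalue with an eigenvector $\mathbf{v} \ge 0$, $\mathbf{v} \neq \mathbf{0}$ (the very vector produced in the proof of Lemma~\ref{rateLemmaIllustrate}); since $\mathbf{N}=(\mathbf{I}-\mathbf{Q})^{-1}$, the same $\mathbf{v}$ satisfies $\mathbf{N}\mathbf{v} = (1-\rho(\mathbf{Q}))^{-1}\mathbf{v} = r\mathbf{v}$ by Lemma~\ref{lemRadiiRelation}. Choosing $\mathbf{q}_0 = \mathbf{v}$ (normalised to a probability vector if desired, which does not affect the scale-invariant ratios), for every $Y$ with $v(Y) \neq 0$ we get $\frac{[\mathbf{N}\mathbf{v}]_Y}{v(Y)} = \frac{r\,v(Y)}{v(Y)} = r$, so $c(\mathbf{v}) = r$. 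Combined with the upper bound, the maximum is attained and equals $r = \rho(\mathbf{N})$.

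The step I expect to be most delicate is this achievability argument, precisely because $\mathbf{N}$ is typically \emph{reducible} for EAs: for an elitist EA the fitness can only be retained or improved, so $\mathbf{Q}$ (and hence $\mathbf{N}$) is block-triangular and the Perron eigenvector $\mathbf{v}$ may have zero coordinates. This turns out to be harmless, since the inner minimisation ranges only over indices $Y$ with $q_0(Y) \neq 0$, so the coordinates where $v(Y)=0$ never enter the ratio and $c(\mathbf{v}) = r$ still holds on the support of $\mathbf{v}$. I would, however, be careful to state explicitly that the outer optimisation is taken over all nonnegative (equivalently, all probability) vectors rather than strictly positive ones; otherwise a short continuity argument approaching $\mathbf{v}$ from the interior of the simplex would be required to recover the supremum.
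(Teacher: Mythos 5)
Your proof is correct, and it reaches the same identity the paper relies on, but by a genuinely different route: the paper's entire proof is a one-line appeal to the Collatz--Wielandt ``max-min'' formula (quoted in a footnote) applied to $\mathbf{N}^T$, whereas you re-derive that formula from scratch --- the upper bound $c(\mathbf{q}_0)\le\rho(\mathbf{N})$ via iterating $\mathbf{N}\mathbf{q}_0\ge c(\mathbf{q}_0)\mathbf{q}_0$ and Gelfand's formula, and achievability via the Perron right eigenvector of $\mathbf{Q}$ combined with Lemma~\ref{lemRadiiRelation}. What the citation buys the paper is brevity; what your derivation buys is a self-contained argument that also makes explicit the one genuinely delicate point, namely that for the (typically reducible, block-triangular) matrices arising from elitist EAs the Perron eigenvector may vanish on some coordinates, so the maximum is attained on the boundary of the simplex and the outer optimisation must range over all nonnegative nonzero vectors, not just strictly positive ones --- a subtlety the paper glosses over, since its $\mathbf{q}_0$ is nominally an initial distribution with $\mathbf{q}_0>0$. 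Your indexing also matches the statement more cleanly: $\sum_X N(Y,X)q_0(X)=[\mathbf{N}\mathbf{q}_0]_Y$, whereas the paper's displayed computation passes through $\mathbf{N}^T$ with slightly loose bookkeeping (harmless, as $\rho(\mathbf{N})=\rho(\mathbf{N}^T)$). One small correction: the eigenvector constructed in the proof of Lemma~\ref{rateLemmaIllustrate} satisfies $\mathbf{Q}^T\mathbf{v}=\rho(\mathbf{Q})\mathbf{v}$, i.e.\ it is a \emph{left} Perron eigenvector of $\mathbf{Q}$, so it is not ``the very vector'' you need; your achievability step requires the \emph{right} Perron eigenvector of $\mathbf{Q}$, which Perron--Frobenius supplies equally well, so the argument stands once that parenthetical is dropped.
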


\begin{proof}
The lemma is a direct  application of the Collatz-Wielandt formula~\footnote{The "max-min" version of the Collatz-Wielandt formula claims that for a nonnegative square matrix $\mathbf{A}=[A_{ij}]$, its spectral radius   $\rho(\mathbf{A})=\max_{\mathbf{x} \in \mathcal{N}}{g(\mathbf{x})} $, where $g(\mathbf{x})=\min_{1\le i \le n, x_i \neq 0} \frac{[\mathbf{A x}]_i}{[\mathbf{x}]_i}$ and the set $\mathcal{N}=\{ \mathbf{x}; \mathbf{x} \ge \mathbf{0} \mbox{ with } \mathbf{x} \neq \mathbf{0} \}$
\citep[p670]{meyer2000matrix}. There exists a "min-max"  version of the Collatz-Wielandt formula, which is applicable   too.}.
\begin{align*}
\rho(\mathbf{N})=\rho(\mathbf{N}^T)&=\max_{\mathbf{q}_0 }\,\,\, \min_{Y: q_0(Y) \neq 0}\frac{ [\mathbf{N}^T \mathbf{q}_0]_Y}{[  \mathbf{q}_0]_Y}\\
&=\max_{\mathbf{q}_0 }\,\,\,\min_{Y: q_0(Y) \neq 0}\frac{ \sum_{X \in \mathcal{S}_{\non}} N(Y,X)q_0(X)}{q_0(Y) },
\end{align*}
which is the conclusion.
\end{proof}

Eventually, the lemma below establishes lower and upper bounds on the spectral radius of the fundamental matrix.

\begin{lemma}
\label{lemSpectralBoundsFund}
The  spectral radius of the fundamental matrix satisfies the following inequality:
\begin{align}
\min_{X \in \mathcal{S}_{\non} } m(X) \le \rho(\mathbf{N}) \le   \max_{X \in \mathcal{S}_{\non} }  m(X).
\end{align}
\end{lemma}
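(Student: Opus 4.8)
The plan is to obtain both inequalities as one-line specialisations of the Collatz--Wielandt characterisation of the spectral radius, evaluated at the all-ones vector $\mathbf{1}=(1,1,\dots)^T$. The bridge to the quantities $m(X)$ is Lemma~\ref{lemFundMatrix}, which tells us that $\sum_{X \in \mathcal{S}_{\non}} N(Y,X) = m(Y)$; that is, the inner sum appearing in Lemma~\ref{lemMaxMin}, once the weights $q_0(X)$ are all set equal, collapses precisely to the expected number of generations $m(Y)$. Thus the statement to be proved is just the classical sandwiching of $\rho(\mathbf{N})$ between the smallest and the largest ``row sum'' $m(X)$ of the nonnegative matrix $\mathbf{N}=(\mathbf{I}-\mathbf{Q})^{-1}$.

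For the lower bound I would take $\mathbf{q}_0=\mathbf{1}$ in Lemma~\ref{lemMaxMin}. Since that lemma writes $\rho(\mathbf{N})$ as a \emph{maximum} over all admissible $\mathbf{q}_0$ of an inner minimum, $\rho(\mathbf{N})$ dominates the value attained at this particular choice. With every $q_0(X)=1$ the ratio reduces to $\sum_{X} N(Y,X)=m(Y)$, so the inner minimum is $\min_{Y} m(Y)$, and hence $\rho(\mathbf{N}) \ge \min_{X \in \mathcal{S}_{\non}} m(X)$.

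For the upper bound I would appeal to the dual ``min--max'' form of the Collatz--Wielandt formula noted in the footnote to Lemma~\ref{lemMaxMin}, applied as there to $\mathbf{N}^T$ (recall $\rho(\mathbf{N})=\rho(\mathbf{N}^T)$). This expresses $\rho(\mathbf{N})$ as a \emph{minimum} over strictly positive vectors of an inner maximum, so $\rho(\mathbf{N})$ is bounded above by the value at any fixed strictly positive vector. The choice $\mathbf{q}_0=\mathbf{1}$ is admissible and again turns the ratio into $\sum_X N(Y,X)=m(Y)$, whence the inner maximum is $\max_Y m(Y)$ and $\rho(\mathbf{N}) \le \max_{X \in \mathcal{S}_{\non}} m(X)$. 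Combining the two bounds yields the claim.

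I do not anticipate a genuine obstacle: the argument is a two-line substitution into results already in hand. The only points demanding care are bookkeeping. One must pair the max--min version of Collatz--Wielandt with the lower bound and the min--max version with the upper bound; one must check that $\mathbf{1}$ is a legitimate test vector in each case (it is nonnegative and nonzero for the max--min formula, and strictly positive as the min--max formula requires); and one must keep the transpose straight so that it is $m(Y)=\sum_X N(Y,X)$, and not the other index-sum, that appears, exactly as in the derivation of Lemma~\ref{lemMaxMin}. Nonnegativity of $\mathbf{N}$, required by both formulas, is automatic since its entries are expected visit counts.
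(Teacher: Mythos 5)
Your proof is correct and follows essentially the same route as the paper: the paper invokes the standard fact that $\rho(\mathbf{N})$ is sandwiched between the minimum and maximum row sums of the nonnegative matrix $\mathbf{N}$ (citing \citep[Exercise 8.2.7]{meyer2000matrix}, with a footnote noting the Collatz--Wielandt justification), and identifies the row sums with $m(X)$ via Lemma~\ref{lemFundMatrix}. Your derivation simply unpacks that cited fact by evaluating the two Collatz--Wielandt formulas at the all-ones vector, which is exactly the argument the paper's footnote alludes to.
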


\begin{proof}
The lemma is a direct consequence of the following fact~\footnote{The fact is given  in \citep[Exercise 8.2.7]{meyer2000matrix}. The result in Exercise 8.2.7   is stated only for positive matrices, yet an identical argument that replaces the Collatz-Wielandt formula  for positive matrices by the Collatz-Wielandt formula for non-negative matrices shows that the same fact holds for all non-negative matrices.}: given any $n \times n$ non-negative $\mathbf{A}=[a_{ij}]$, its spectral radius satisfies the inequalities
\begin{align*}
 \min_{i} \sum^n_{j=1} a_{ij} \le \rho( \mathbf{A}) \le  \max_{i} \sum^n_{j=1} a_{i,j}.
\end{align*}
Indeed, substituting $\mathbf{N}$ in place of $\mathbf{A}$ yields the desired conclusion.
\end{proof}

\subsection{Matrix Norms of the Fundamental Matrix}
Matrix norms \footnote{For a square matrix $\mathbf{A}=[A_{ij}]$,  $ \parallel \mathbf{A} \parallel_\infty = \max \limits _{1 \leq i \leq n} \sum _{j=1} ^n | A_{ij} |$ and $\parallel \mathbf{A} \parallel_a=(\sum_{i,j=1}^n |A_{ij}|) /n$.} can be used as a  measure of the expected number of visits. The $\infty$-norm of the fundamental matrix is given as
\begin{align}
\parallel \mathbf{N} \parallel_{\infty} = \max_{X \in \mathcal{S}_{\non}} \sum_{Y \in \mathcal{S}_{\non}} N(X,Y).
\end{align}

\begin{lemma}
 The $\infty$-norm of the fundamental matrix equals
$
\parallel \mathbf{N} \parallel_{\infty} = \max_{X \in \mathcal{S}_{\non}} m(X).
$
\end{lemma}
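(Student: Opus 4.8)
The plan is to reduce the claim to the displayed identity for the $\infty$-norm that appears immediately before the statement, combined with Lemma~\ref{lemFundMatrix}. First I would recall the footnote definition: for a square matrix $\mathbf{A}=[A_{ij}]$, the $\infty$-norm is the maximal absolute row sum, $\parallel \mathbf{A} \parallel_{\infty} = \max_{i} \sum_{j} |A_{ij}|$. The crucial structural observation is that every entry of the fundamental matrix is non-negative, since $N(X,Y)$ is an expected number of visits and hence $N(X,Y)\ge 0$ for all transient $X,Y$. This lets me drop the absolute values, so that the $\infty$-norm of $\mathbf{N}$ is exactly the largest row sum, indexed by the starting state $X$, which is precisely the displayed equation preceding the statement, namely $\parallel \mathbf{N} \parallel_{\infty} = \max_{X \in \mathcal{S}_{\non}} \sum_{Y \in \mathcal{S}_{\non}} N(X,Y)$.

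Second, I would invoke Lemma~\ref{lemFundMatrix}, which identifies each such row sum with the expected number of generations to absorption: $m(X) = \sum_{Y \in \mathcal{S}_{\non}} N(X,Y)$. Substituting this identity row by row turns the maximization over row sums into a maximization of $m(X)$ over all transient starting states, giving $\parallel \mathbf{N} \parallel_{\infty} = \max_{X \in \mathcal{S}_{\non}} m(X)$, which is the conclusion. The entire argument is thus a direct substitution once the non-negativity of the entries has been noted.

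There is essentially no hard step here; the only point deserving care is the index bookkeeping, which I would flag explicitly. The $\infty$-norm is a maximum over rows, and the row of $\mathbf{N}$ indexed by the starting state $X$ sums, across the destination states $Y$, to exactly $m(X)$ by Lemma~\ref{lemFundMatrix}. Had one instead used the $1$-norm, which is a maximum absolute column sum, the relevant quantity would be $\max_{Y} \sum_{X} N(X,Y)$, a sum over starting states for a fixed visited state, which does not reduce to $\max_{X} m(X)$ in general. Keeping the row/column distinction straight, so that the norm couples correctly to the summation defining $m(X)$, is the whole of the matter.
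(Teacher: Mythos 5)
Your proposal is correct and follows exactly the paper's own (one-line) argument: the identity is the definition of the $\infty$-norm applied to the non-negative matrix $\mathbf{N}$ combined with Lemma~\ref{lemFundMatrix}. The extra care you take with non-negativity and the row/column distinction is sound but does not change the route.
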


\begin{proof}
This follows immediately from the definition of the matrix $\infty$-norm and Lemma~\ref{lemFundMatrix}.
\end{proof}

The definition and the lemma above provide us with two equivalent interpretations of $\parallel \mathbf{N} \parallel_\infty$.
  $\parallel \mathbf{N} \parallel_\infty$ is the maximal value of the expected number of visits to the set of transient states among all possible initializations.
This is equivalent to saying that $\parallel \mathbf{N} \parallel_\infty$ is the maximal value of the expected number of generations to reach the optimal set among all possible starting transient states.

The   $a$-norm of the fundamental matrix is defined as
\begin{align}
\parallel \mathbf{N} \parallel_a =  \frac{1}{\mid \mathcal{S}_{\non} \mid } \sum_{X,Y \in \mathcal{S}_{\non}} N(X,Y).
\end{align}

\begin{lemma}
 The $\infty$-norm of the fundamental matrix is alternatively described as follows:
\begin{align}
\parallel \mathbf{N} \parallel_a = \frac{1}{\mid \mathcal{S}_{\non} \mid } \sum_{X  \in \mathcal{S}_{\non}} m(X).
\end{align}
\end{lemma}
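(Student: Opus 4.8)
The plan is to derive this identity directly from the definition of the $a$-norm together with the interpretation of row sums of $\mathbf{N}$ provided by Lemma~\ref{lemFundMatrix}. The quantity $\parallel \mathbf{N} \parallel_a$ is defined as the double sum $\frac{1}{\mid \mathcal{S}_{\non} \mid} \sum_{X,Y \in \mathcal{S}_{\non}} N(X,Y)$ divided by the number of transient states, so the essential observation is simply that this double sum can be reorganized as an outer sum over $X$ of the inner row sums over $Y$.

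Concretely, I would first regroup the summation by fixing $X$ and summing over $Y$ first:
\begin{align*}
\parallel \mathbf{N} \parallel_a = \frac{1}{\mid \mathcal{S}_{\non} \mid} \sum_{X \in \mathcal{S}_{\non}} \left( \sum_{Y \in \mathcal{S}_{\non}} N(X,Y) \right).
\end{align*}
This is just Fubini-type reindexing of a finite double sum, which is valid since $\mathcal{S}_{\non}$ is finite and every entry $N(X,Y)$ is finite (the EA is convergent, so the chain is absorbing with finite expected absorption time).

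The second step is to recognize the inner sum as the expected absorption time. By Lemma~\ref{lemFundMatrix}, for each fixed transient state $X$ one has $\sum_{Y \in \mathcal{S}_{\non}} N(X,Y) = m(X)$. Substituting this into the displayed expression immediately yields $\parallel \mathbf{N} \parallel_a = \frac{1}{\mid \mathcal{S}_{\non} \mid} \sum_{X \in \mathcal{S}_{\non}} m(X)$, which is the claimed identity.

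There is essentially no obstacle here: the argument is a one-line substitution of Lemma~\ref{lemFundMatrix} into the definition of the $a$-norm, entirely parallel to the proof of the companion lemma for $\parallel \mathbf{N} \parallel_{\infty}$, where the only difference is that the outer operation is an average rather than a maximum. The sole point worth stating explicitly is the finiteness of $\mathcal{S}_{\non}$ and of the entries of $\mathbf{N}$, which justifies the interchange of summation order; both follow from the standing convergence assumption on the EA.
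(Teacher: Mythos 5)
Your argument is correct and is essentially the same as the paper's: the paper likewise proves this as an immediate consequence of the definition of the $a$-norm together with Lemma~\ref{lemFundMatrix}, and your regrouping of the finite double sum followed by the substitution $\sum_{Y \in \mathcal{S}_{\non}} N(X,Y) = m(X)$ is exactly that observation spelled out. The only remark worth adding is that the lemma's statement (in the paper) says ``$\infty$-norm'' where it means ``$a$-norm''; your proof correctly treats the $a$-norm.
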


\begin{proof}
This is an immediate consequence of the definition of the matrix $a$-norm and lemma~\ref{lemFundMatrix}.
\end{proof}

The definition and the lemma above reveal the following two equivalent meanings of $\parallel \mathbf{N} \parallel_a$.
  $\parallel \mathbf{N} \parallel_a$ is the average value of the expected number of visits to the set of transient states among all possible initial transient states.
  $\parallel \mathbf{N} \parallel_a$ is the average value of the expected number of generations it takes to reach the optimal set among all possible initial transient states.

\subsection{Population Scalability}
Given a $(1+1)$  EA and a $(\mu+\mu)$   EA (where $\mu \ge 2$) that exploit an identical mutation operator to optimize the same fitness function, population scalability is measured by the ratio between their performances.
As discussed in previous sections, there are different approaches to evaluate the   performance of an EA and, hence, there are several ways to measure population scalability.
\begin{definition}
\emph{Population scalability under the spectral radius}  of the fundamental matrix is
  \begin{align}
& \mbox{\emph{$\rho$-scalability}} (\mu) = \frac{\rho(\mathbf{N}^{(1)})}{\rho(\mathbf{N}^{(\mu)})}= \frac{1-\rho(\mathbf{Q}^{(\mu)})}{1-\rho(\mathbf{Q}^{(1)})} \\
    & \qquad = \frac{\mbox{\emph{average convergence rate of the $(\mu+\mu)$ EA}}}{\mbox{\emph{average convergence rate of the $(1+1)$ EA}}}.
\end{align}
\end{definition}
\begin{definition}
\emph{Population scalability   under the  $\infty$-norm} of the fundamental matrix is
  \begin{align}
   &\mbox{\emph{$\infty$-scalability}} (\mu)  = \frac{\parallel \mathbf{N}^{(1)} \parallel_{\infty}}{\parallel \mathbf{N}^{(\mu)} \parallel_{\infty}} \\
    &\qquad  = \frac{\mbox{\emph{maximum value of expected numbers of generations of the $(1+1)$ EA}}}{\mbox{\emph{maximum value of expected numbers of generations of the $(\mu+\mu)$ EA}}}.
\end{align}
\end{definition}
\begin{definition}
\emph{Population scalability under the $a$-norm} of the fundamental matrix is
  \begin{align}
   &\mbox{\emph{$a$-scalability}} (\mu)  = \frac{\parallel \mathbf{N}^{(1)} \parallel_{a}}{\parallel \mathbf{N}^{(\mu)} \parallel_{a}} \\
    & \qquad = \frac{\mbox{\emph{average value of expected numbers of generations of  the $(1+1)$ EA}}}{\mbox{\emph{average value of expected numbers of generations of   the $(\mu+\mu)$ EA}}},
\end{align}
where the average is taken over all of the transient states, excluding the absorbing state(s).

If considering the average over all the states, an alternative definition is given by
\begin{align}
   &\mbox{\emph{$\hat{a}$-scalability}} (\mu)  = \frac{\mid \mathcal{S}^{(1)}_{\non} \mid  }{\mid \mathcal{S}^{(\mu)}_{\non} \mid  }\frac{\parallel \mathbf{N}^{(1)} \parallel_{a}}{\parallel \mathbf{N}^{(\mu)} \parallel_{a}}.
\end{align}
\end{definition}

An essential part of the definitions above is that both EAs must adopt
identical mutation operators. This ensures that the comparison is meaningful.
Nonetheless, it is impossible for the selection operators to be identical.
Indeed even if the selection operators are of the same type, for example roulette wheel selection, the conditional probabilities determining the actual selection operators are never
identical under distinct population sizes.

The following  questions are fundamental when studying population scalability.
\begin{enumerate}
\item As the population size increases from 1 to $\mu$ (where $\mu  \ge 2$), is the
$
\mbox{\emph{scalability}}(\mu)>1?
$

If the population scalability is not bigger than $1$, then we say that the $(\mu+\mu)$  EA has \emph{no scalability} with respect to the  $(1+1)$ EA.

\item As the population size increases from $1$ to $\mu$ (where $\mu  \ge 2$), is the
$
\mbox{\emph{scalability}}(\mu)>\mu?
$

If the population scalability is greater than $\mu$, then we say that the $(\mu+\mu)$  EA has \emph{superlinear  scalability} with respect to the  $(1+1)$ EA.
\end{enumerate}

Population scalability is different from the relationship between the performance of an EA and its population size discussed in previous references such as~\citep{jansen2005choice}.  There the comparison of the two EAs is carried out in terms of the big O notation. The difference is clearly demonstrated through the following  question:
\begin{align*}
    \frac{\mbox{ {maximum value of expected number of generations for the $(1+1)$ EA}}}{\mbox{ {maximum value of expected numbers of generations for the $(\mu+\mu)$ EA}}}< 1?
\end{align*}

Within the  framework in \citet{jansen2005choice}, the question may be reformulated as
\begin{align*}
    \frac{\mbox{ {maximum value of expected numbers of generations of $(1+1)$ EA}}}{\mbox{ {maximum value of expected numbers of generations of $(\mu+\mu)$ EA}}}=O(1)?
\end{align*}
Here $O(1)$ is big O notation. Nonetheless, there is a drawback in using the big O notation when studying population scalability. For example, $O(1)$ does not distinguish between the case when the expected number of generations the $(1+1)$ EA takes to reach an optimum for the first time is $100$ times that the  $(\mu+\mu)$ EA takes, and the case when the expected number of generations the $(1+1)$ EA takes is $1/100$ times that the $(\mu+\mu)$ EA takes. In this sense, population scalability analysis is  different from the work in \citet{jansen2005choice}.

The notion of population scalability is similar to that of the speedup widely used when analysing parallel algorithms. Nonetheless, population scalability doesn't depend on the number of parallel computing processors. There is a link between  superlinear population scalability and superlinear speedup in parallel EAs. If each individual is assigned to a processor, then EAs turn into parallel EAs. Under this circumstance,  superlinear scalability implies  superlinear speedup  if ignoring the communication cost. An interesting question  in parallel EAs is when and how   superlinear speedup  phenomenon happens~\citep{andre1998parallel,alba2002parallelism,alba2002parallel}.

There is an essential difference between the notions of population scalability and that of No Free Lunch Theorems~\citep{wolpert1997no}. Population scalability compares the performance of   two EAs that exploit identical genetic operators but different population sizes to optimize the same fitness function, while the No Free Lunch Theorems compare the average performance of the two EAs over all possible fitness functions.

 \section{Analysis of Population Scalability for Elitist EAs using Global Mutation}
\label{secScalability}
\subsection{Elitist Selection and Global Mutation}
\label{sec3}
This section focuses on investigating elitist EAs that adopt global mutation and elitist selection operators. The corresponding definitions appear below.

\begin{definition}
A mutation operator is called \emph{global} if any individual can reach the optimal set via mutation after a single iteration.
\end{definition}

\begin{definition}
A selection operator is called \emph{elitist}  if the best parent individual is replaced by the best child individual only in case when the best child individual is fitter.
There is no restriction on selecting non-best individuals and any selection strategy can be applied.
\end{definition}

Global mutation guarantees that the optimal set is reachable starting from any initial state, while
elitist selection aims at maintaining the best solution found over time. An alternative elitist operator is to replace the best parent individual by a child with a better or equal fitness~\citep{he2003towards}. In the current paper we do not consider such a variant.

Now let's emphasize two virtuous properties  for mutation and elitist selection.
The first property is called the mutation property.  It compares the probability of going from a population to a higher fitness level with the probability of going from an individual to a higher fitness level.

\begin{lemma} Suppose $X=(x_1, \cdots, x_{\mu}) \in \mathcal{S}^{(\mu)}_{\non}$ is a population with $x_1=x$ being one of the best individuals. Then the following \emph{mutation property} holds for elitist EAs: for  $i=1, \cdots, \mu,$ and  $\mu \ge 2$,
\begin{align}
\label{equMutationPropertyLower}
&P_M(X, \mathcal{S}^{(\mu)}_{\high}(X)) \ge P_M(x_i, \mathcal{S}^{(1)}_{\high}(x)), \\
\label{equMutationPropertyUpper}
&P_M(X, \mathcal{S}^{(\mu)}_{\high}(X)) \le \sum^{\mu}_{i=1} P_M(x_i, \mathcal{S}^{(1)}_{\high}(x)), 
\end{align}
where   $\mathcal{S}^{(\mu)}_{\high}(x)$ denotes the set consisting of all populations whose best individual's fitness is higher than $f(x)$.

Furthermore, if a mutation operator is global, then the inequalities above are strict.
\end{lemma}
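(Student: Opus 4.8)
The plan is to exploit the fact that mutation acts independently on each individual of the population, so that the event ``the offspring population lands in $\mathcal{S}^{(\mu)}_{\high}(X)$'' decomposes into a union of independent single-individual events. First I would write $\Phi_{t+1/2}=(y_1,\dots,y_\mu)$, where each $y_i$ is obtained from $x_i$ by an independent application of the mutation operator, so that $P(y_i\in\cdot)=P_M(x_i,\cdot)$ and the coordinates are mutually independent. Because $x=x_1$ is one of the best individuals of $X$, the best fitness present in $X$ is exactly $f(x)$; hence a population $Y$ lies in $\mathcal{S}^{(\mu)}_{\high}(X)$ precisely when its best individual has fitness strictly above $f(x)$, i.e.\ precisely when at least one offspring $y_i$ satisfies $f(y_i)>f(x)$. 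Setting $A_i=\{y_i\in\mathcal{S}^{(1)}_{\high}(x)\}$, this identifies
\[
P_M(X,\mathcal{S}^{(\mu)}_{\high}(X))=P\Big(\bigcup_{i=1}^{\mu}A_i\Big),\qquad P(A_i)=P_M(x_i,\mathcal{S}^{(1)}_{\high}(x)).
\]

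With this translation the two inequalities become elementary facts about a union of events. For the lower bound~(\ref{equMutationPropertyLower}) I would use monotonicity: $A_i\subseteq\bigcup_jA_j$ gives $P(\bigcup_jA_j)\ge P(A_i)$ for every $i$. For the upper bound~(\ref{equMutationPropertyUpper}) I would invoke Boole's inequality, $P(\bigcup_jA_j)\le\sum_jP(A_j)$. Neither step uses independence; independence enters only in the strict refinement.

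For the strict versions I would first observe that global mutation forces $P(A_i)>0$ for all $i$: since $X$ is non-optimal, $f(x)$ is below the optimal fitness, so $\mathcal{S}^{(1)}_{\opt}\subseteq\mathcal{S}^{(1)}_{\high}(x)$, and globality gives $P_M(x_i,\mathcal{S}^{(1)}_{\opt})>0$, whence $P(A_i)\ge P_M(x_i,\mathcal{S}^{(1)}_{\opt})>0$. Using independence, $P(\bigcup_jA_j)=1-\prod_j\big(1-P(A_j)\big)$. The strict upper bound then follows because, for $\mu\ge 2$, any two events $A_i,A_k$ with $i\neq k$ satisfy $P(A_i\cap A_k)=P(A_i)P(A_k)>0$, so there is positive probability that two offspring simultaneously exceed $f(x)$ and the union bound cannot be tight. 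The strict lower bound follows by comparing against the pair $A_i\cup A_k$ (such $k\neq i$ exists since $\mu\ge 2$): $P(\bigcup_jA_j)\ge P(A_i)+P(A_k)\big(1-P(A_i)\big)>P(A_i)$.

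The main obstacle I anticipate is the strict lower bound in the degenerate case $P(A_i)=1$, where the factor $1-P(A_i)$ vanishes and equality could in principle hold. This occurs only if $x_i$ improves on $f(x)$ with certainty; I would rule it out by noting that global (strictly positive) mutation also assigns positive probability to some non-improving target, e.g.\ an individual of fitness $\le f(x)$, so that $P(A_i)<1$. Care is needed to state this positivity precisely and to keep the argument uniform over all $i\in\{1,\dots,\mu\}$, since the claim is asserted for every $i$, including $i=1$ where $x_i=x$ is itself a best individual.
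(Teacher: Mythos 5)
Your argument follows essentially the same route as the paper's: both identify $P_M(X,\mathcal{S}^{(\mu)}_{\high}(X))$ with the probability that at least one of the $\mu$ independently mutated offspring lands in $\mathcal{S}^{(1)}_{\high}(x)$, write this via independence as $1-\prod_{i=1}^{\mu}\bigl(1-P_M(x_i,\mathcal{S}^{(1)}_{\high}(x))\bigr)$, and read off the lower bound from monotonicity of measure and the upper bound from Boole's inequality. You are in fact more careful than the paper, which dismisses the strict versions with ``it is easy to see.''

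The one place your argument overreaches is the patch for the degenerate case $P(A_i)=1$ in the strict lower bound. You justify $P(A_i)<1$ by asserting that global mutation ``assigns positive probability to some non-improving target,'' but the paper's definition of a global mutation operator requires only that every individual reach the optimal set with positive probability in one step; it does not require the mutation transition matrix to be strictly positive. That definition yields $P(A_i)\ge P_M(x_i,\mathcal{S}^{(1)}_{\opt})>0$ (which is all you need for the strict upper bound and for positivity of each $P(A_j)$), but it does not rule out $P(A_i)=1$, in which case inequality~(\ref{equMutationPropertyLower}) holds with equality for that index $i$. So the edge case you correctly spotted is a genuine one, and neither your positivity claim nor the paper's one-line assertion actually closes it under the stated definitions; to make the strict lower bound airtight you would need either an added hypothesis (e.g.\ a strictly positive mutation matrix) or a restriction of the strictness claim to indices with $P_M(x_i,\mathcal{S}^{(1)}_{\high}(x))<1$. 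Everything else in your proposal is sound and matches the paper's reasoning.
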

\begin{proof}
Notice that the event of going from  $X$ to a higher fitness level can be alternatively expressed as the event that at least one of individuals $x_i$ goes to a higher fitness level.

Since mutation is performed independently,
so that for $\mu \ge 2$ the probability of going from   $X$  to a higher fitness level  is
\begin{align*}
P_M( X, \mathcal{S}^{(\mu)}_{\high}  ) &= 1-\prod^{\mu}_{i=1} (1- P_M(x_i, \mathcal{S}^{(1)}_{\high}(x)),
\end{align*}
implying that for   $i=1, \cdots, \mu$,
$$P_M(X, \mathcal{S}^{(\mu)}_{\high}(X)) \ge P_M(x_i, \mathcal{S}^{(1)}_{\high}(x)).$$

The inequality for  $\mu \ge 2$
$$P_M(X, \mathcal{S}^{(\mu)}_{\high}(X)) \le \sum^{\mu}_{i=1} P_M(x_i, \mathcal{S}^{(1)}_{\high}(x))$$
follows trivially from the fact that the probability of a union of events is always bounded above by the sum of the probabilities of the constituent events.

Moreover, it is easy to see that the above inequalities are strict if the mutation operator is global.
\end{proof}

Elitist selection insures that the best individual in a population will either enter a higher fitness level or remain unchanged,  thereby never getting worse. This phenomenon is called the {elitist selection  property} and it can be reformulated as follows.
\begin{lemma} Given a population $X$ the best individual of which is $x$, the \emph{elitist selection  property} implies that
\begin{align}
\label{equElitistSelectionProperty}
P(X, \mathcal{S}^{(\mu)}_{\same}(x)) + P(X, \mathcal{S}^{(\mu)}_{\high}(x)) =1,
\end{align}
where $\mathcal{S}^{(\mu)}_{\same}(x)$ denotes the set consisting of all populations the best individual of which is $x$.
\end{lemma}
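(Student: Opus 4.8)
The plan is to show that the two target sets are disjoint and that a single step of the algorithm started at $X$ lands in their union with probability $1$; the claimed identity then follows by additivity of probability over disjoint events. First I would record that $\mathcal{S}^{(\mu)}_{\same}(x)$ and $\mathcal{S}^{(\mu)}_{\high}(x)$ are disjoint: every population in the former has best-individual fitness exactly $f(x)$, whereas every population in the latter has best-individual fitness strictly greater than $f(x)$. Consequently $P(X,\mathcal{S}^{(\mu)}_{\same}(x))+P(X,\mathcal{S}^{(\mu)}_{\high}(x)) = P\!\left(X,\; \mathcal{S}^{(\mu)}_{\same}(x)\cup \mathcal{S}^{(\mu)}_{\high}(x)\right)$, so it suffices to prove that this union is reached with probability $1$.

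Next I would condition on the one-step dynamics $X \to \Phi_{t+1/2}=Y \to \Phi_{t+1}=Z$ and partition the probability space according to whether mutation produces a child strictly fitter than $x$. If the best individual of $Y$ has fitness $> f(x)$, then by the elitist rule the best parent $x$ is replaced by this strictly fitter child, so the best individual of $Z$ has fitness $> f(x)$ and $Z \in \mathcal{S}^{(\mu)}_{\high}(x)$. If instead no child is strictly fitter than $x$, the elitist rule retains the specific individual $x$ in $Z$; since $x$ is a best parent, no member of $X$ has fitness above $f(x)$, no member of $Y$ exceeds $f(x)$ by assumption, and selection can only draw individuals from $X$ and $Y$ (by the natural requirement imposed on the selection operator), it follows that no individual of $Z$ exceeds fitness $f(x)$, whence $x$ is a best individual of $Z$ and $Z \in \mathcal{S}^{(\mu)}_{\same}(x)$. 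These two events are complementary, so together they carry probability $1$.

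The step I expect to require the most care is the tie case, where the best fitness remains at $f(x)$: I must argue that the resulting population belongs to $\mathcal{S}^{(\mu)}_{\same}(x)$, whose defining condition names the specific state $x$ and not merely the fitness level $f(x)$, rather than to some population whose best individual happens to share the fitness $f(x)$ yet is a different state. This is precisely where I would invoke the exact wording of elitist selection, namely that it preserves the best parent individual \emph{itself} unless a strictly fitter child appears, in combination with the constraint that the members of $Z$ originate only from $X$ or $Y$; together these force $x$ to survive and to remain a best individual of $Z$. Once this is settled, combining the two complementary cases with the disjointness noted at the outset yields the identity~(\ref{equElitistSelectionProperty}).
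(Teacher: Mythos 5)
Your proof is correct. The paper states this lemma without any proof, treating it as an immediate restatement of the elitist-selection property (``the best individual in a population will either enter a higher fitness level or remain unchanged''); your two-case analysis --- disjointness of $\mathcal{S}^{(\mu)}_{\same}(x)$ and $\mathcal{S}^{(\mu)}_{\high}(x)$, then conditioning on whether mutation produces a strictly fitter child, with the careful observation that in the tie case the specific individual $x$ itself survives into $Z$ and remains best there --- is exactly the reasoning the paper leaves implicit, worked out in full.
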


\subsection{Transition Matrices of Elitist EAs}
First consider the  $(1+1)$ elitist EA.
Arrange  all states in $\mathcal{S}^{(1)}_{\non}$ in the order of their fitness  from high to low (where the individuals at the same fitness level may be arranged  in any order), and write them in a vector form
$ ( x_1, x_2, x_3,\cdots )^T, $
where $f(x_1) \ge f(x_2) \ge f(x_3) \ge \cdots.$

Thanks to the elitist selection property discussed in the previous section, the best individual never enters a lower fitness level meaning that
for any individuals $x$ and $y$ with $f(y) \le f(x)$, the entry of the Markov transition matrix that stands for the probability of going from an individual  $x$ to an individual $y$,
$$
P(x,y) = 0.
$$

It follows then that the transition matrix $\mathbf{Q}^{(1)}$ is lower triangular and can be written in the following form:
\begin{equation}
\label{equ19}
\mathbf{Q}^{(1)} =\begin{pmatrix}
P(x_1,x_1) &  0  &  0  & \cdots &    0  \\
P(x_2, x_1)  &  P(x_2, x_2)  &0&\cdots &  0  \\
\vdots & \vdots  & \vdots&\vdots &\vdots
\end{pmatrix}.
\end{equation}

The following simple fact now follows naturally  from the definition of eigenvalues and spectral radius \citep[p.490]{meyer2000matrix}.
\begin{lemma}
\label{lem3} Given transition matrix  $\mathbf{Q}^{(1)}$   and let  $x_{\rho}$ be the state  such that
  \begin{align}
   \label{equ20}
    x_{\rho} =\arg\max\{ P(x,x); x \in \mathcal{S}^{(1)}_{\non}\}.
  \end{align}
  Then the spectral radius
  \begin{equation}
   \rho(\mathbf{Q}^{(1)})=   P(x_{\rho},x_{\rho}) .
  \end{equation}
\end{lemma}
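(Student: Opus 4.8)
The plan is to leverage the lower-triangular structure of $\mathbf{Q}^{(1)}$ already recorded in Equation~(\ref{equ19}), which is itself a direct consequence of the elitist selection property ($P(x,y)=0$ whenever $f(y)<f(x)$) combined with the arrangement of states in order of decreasing fitness. Since $D$ is finite, $\mathcal{S}^{(1)}_{\non}$ is finite and $\mathbf{Q}^{(1)}$ is a genuine finite square lower-triangular matrix. Its characteristic polynomial therefore factors as $\det(\lambda \mathbf{I}-\mathbf{Q}^{(1)})=\prod_i (\lambda-P(x_i,x_i))$, so the eigenvalues of $\mathbf{Q}^{(1)}$ are precisely the diagonal entries $P(x_i,x_i)$, counted with multiplicity.

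The next step is to invoke the definition of the spectral radius as the largest modulus among the eigenvalues, giving $\rho(\mathbf{Q}^{(1)})=\max_i |P(x_i,x_i)|$. The only substantive observation that remains is that every diagonal entry is a transition probability and hence lies in $[0,1]$; in particular each is nonnegative, so $|P(x_i,x_i)|=P(x_i,x_i)$. Consequently $\rho(\mathbf{Q}^{(1)})=\max_i P(x_i,x_i)$, which by the defining choice of $x_{\rho}$ in Equation~(\ref{equ20}) equals $P(x_{\rho},x_{\rho})$, as claimed.

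There is no genuine obstacle here: the result is essentially the textbook fact that a triangular matrix carries its eigenvalues on its diagonal, specialized to a nonnegative matrix so that the maximum modulus coincides with the maximum entry. The only points worth making explicit are that finiteness of the state space ensures the triangular form is that of an honest finite matrix (so the factorization of the characteristic polynomial is valid), and that nonnegativity of probabilities lets us drop the absolute value. As an aside, the same identity could alternatively be read off from the Perron--Frobenius conclusion used elsewhere in the paper, but the triangular factorization is the most direct route and matches the lemma's claim that it follows ``naturally from the definition of eigenvalues and spectral radius.''
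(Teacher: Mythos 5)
Your proof is correct and follows exactly the route the paper intends: it cites the lower-triangular form of $\mathbf{Q}^{(1)}$ in Equation~(\ref{equ19}) and states that the lemma ``follows naturally from the definition of eigenvalues and spectral radius,'' which is precisely the triangular-matrix eigenvalue fact you spell out. Your additional remarks on finiteness and nonnegativity are sound and simply make explicit what the paper leaves implicit.
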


Here $P(x,x)$ is the probability of the Markov chain remaining in state $x$.
The above lemma shows that the spectral radius of a Markov transition submatrix, that models a $(1+1)$ EA (restricted to the non-optimal states), is the maximal self-transition probability of a non-optimal state remaining unchanged in the next generation.

Next we consider a $(\mu+\mu)$ EA  (where $\mu \ge 2$).
Arrange  all populations in $\mathcal{S}^{(\mu)}_{\non}$ in the order of the fitness  of their best individual from high to low (where  populations with the same best individual are arranged  together in an arbitrary order), and  write them in a vector form:
 $( X_1, X_2, \cdots  )^T.$
If $x_1, x_2, \cdots$ denote the corresponding best individuals, then their fitness decreases: $f(x_1) \ge f(x_2) \ge \cdots .$

Once again, thanks to the elitist selection property, the best individual in a population  never revisits any state at a lower fitness level.
Thus   the   probability  of going from population $X$ (with the best individual $x$) to population $Y$ that is not in the set  $\mathcal{S}^{(\mu)}_{\same}(x)$ or $\mathcal{S}^{(\mu)}_{\high}(x)$ is
$0$. It follows then that the matrix $ \mathbf{Q}^{(\mu)}$ is a block lower triangular matrix, that can be written in the form:
\begin{equation}\label{equ22}
\mathbf{Q}^{(\mu)}=
\begin{pmatrix}
 \mathbf{Q}^{(\mu)}_{x_1,x_1} & \mathbf{O} & \mathbf{O} & \cdots   \\
 \mathbf{Q}^{(\mu)}_{x_1,x_2}  &  \mathbf{Q}^{(\mu)}_{x_2, x_2} & \mathbf{O} &\cdots   \\
 \vdots & \vdots  & \vdots &\vdots   \\
 \end{pmatrix},
\end{equation}
where $\mathbf{O}$ denotes a zero  matrix and $\mathbf{Q}^{(\mu)}_{x,y}$ is the  submatrix  consisting of transition probabilities from the states in $\mathcal{S}^{(\mu)}_{\same} (x)$ to the states in $\mathcal{S}^{(\mu)}_{\same} (y)$. In particular, $\mathbf{Q}^{(\mu)}_{x,x}$ is the submatrix consisting of transition probabilities within the set $\mathcal{S}^{(\mu)}_{\same} (x)$.

The following lemma is an extension of Lemma \ref{lem3} for the case when $\mu\ge 2$.
\begin{lemma}
\label{lem4}
Let $\mathbf{Q}^{(\mu)}_{x,x}$ denote the diagonal block submatrix as in (\ref{equ22}), that represents transitions within the set $\mathcal{S}^{(\mu)}_{\same} (x) $,
then
  \begin{align}
        &\rho(\mathbf{Q}^{(\mu)}) =\max_{x \in \mathcal{S}^{(1)}_{\non}}  \rho (\mathbf{Q}^{(\mu)}_{x,x}) .
      \end{align}
\end{lemma}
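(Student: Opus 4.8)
The plan is to exploit the block lower triangular structure of $\mathbf{Q}^{(\mu)}$ displayed in~(\ref{equ22}) together with the classical fact that the spectrum of a block triangular matrix is the union of the spectra of its diagonal blocks. Since the spectral radius is by definition the largest modulus among all eigenvalues, the identity $\rho(\mathbf{Q}^{(\mu)}) = \max_x \rho(\mathbf{Q}^{(\mu)}_{x,x})$ will drop out immediately once this spectral decomposition is in hand. In effect the statement is the block analogue of Lemma~\ref{lem3}, which handled the scalar diagonal of the $(1+1)$ case.

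First I would recall why the block triangular form is legitimate. The states of $\mathcal{S}^{(\mu)}_{\non}$ have been arranged so that populations sharing the same best individual are grouped together and the groups are ordered by decreasing best-individual fitness; by the elitist selection property the best individual never moves to a lower fitness level, so every transition either stays within the current group $\mathcal{S}^{(\mu)}_{\same}(x)$ or moves to a strictly higher group. This forces all blocks strictly above the diagonal to vanish, yielding exactly the block lower triangular matrix of~(\ref{equ22}) whose diagonal blocks are the $\mathbf{Q}^{(\mu)}_{x,x}$, one for each $x \in \mathcal{S}^{(1)}_{\non}$ (each such $x$ does index a block, since the constant population $(x,\ldots,x)$ lies in $\mathcal{S}^{(\mu)}_{\same}(x)$).

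Next I would compute the characteristic polynomial. For any block lower triangular matrix the determinant equals the product of the determinants of its diagonal blocks; applying this to $\lambda\mathbf{I}-\mathbf{Q}^{(\mu)}$ yields
$$
\det\!\left(\lambda \mathbf{I}-\mathbf{Q}^{(\mu)}\right)=\prod_{x \in \mathcal{S}^{(1)}_{\non}}\det\!\left(\lambda \mathbf{I}-\mathbf{Q}^{(\mu)}_{x,x}\right).
$$
Hence $\lambda$ is an eigenvalue of $\mathbf{Q}^{(\mu)}$ if and only if it is an eigenvalue of at least one diagonal block $\mathbf{Q}^{(\mu)}_{x,x}$; that is, the spectrum of $\mathbf{Q}^{(\mu)}$ is the union of the spectra of the diagonal blocks. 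Taking the maximum modulus over this union, and using that each block is nonnegative so that by the Perron-Frobenius theorems its spectral radius $\rho(\mathbf{Q}^{(\mu)}_{x,x})$ is itself attained by an eigenvalue of that block, gives $\rho(\mathbf{Q}^{(\mu)})=\max_{x}\rho(\mathbf{Q}^{(\mu)}_{x,x})$, which is the claim.

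I do not anticipate a substantive obstacle here: the entire content is the determinant factorization for triangular block matrices lifted from the scalar diagonal of Lemma~\ref{lem3} to the block diagonal. The only points requiring a line of care are bookkeeping ones — confirming that the grouping-and-ordering of populations really produces the vanishing of the super-diagonal blocks (which is precisely the elitist selection property), and noting that every non-optimal individual $x$ genuinely contributes a diagonal block, so that the maximum is indeed taken over all of $\mathcal{S}^{(1)}_{\non}$.
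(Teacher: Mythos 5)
Your proposal is correct and follows essentially the same route as the paper: both rest on the fact that the spectrum of a block lower triangular matrix is the union of the spectra of its diagonal blocks (the paper cites this from Meyer, Exercise 7.1.4, while you derive it via the determinant factorization), after which the claim is immediate. The appeal to Perron--Frobenius at the end is superfluous --- the spectral radius of each finite block is attained by one of its eigenvalues by definition --- but this does not affect the argument.
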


\begin{proof}
The proof  is based on a simple fact  \citep[Exercise 7.1.4]{meyer2000matrix}: if $\mathbf{A}$ is a block lower triangular matrix such that
\begin{equation}
\mathbf{A} =
\begin{pmatrix}
\mathbf{A}_{1,1}  & \mathbf{O}    \\
\mathbf{A}_{2,1} &  \mathbf{A}_{2,2}
\end{pmatrix},
\end{equation}
then $\lambda$ is an eigenvalue of $\mathbf{A}$ if and only if $\lambda$ is an eigenvalue of $\mathbf{A}_{1,1}$ or $\mathbf{A}_{2,2}$.  Thus
$\rho(\mathbf{A}) =\max\{ \rho(\mathbf{A}_{1,1}), \rho(\mathbf{A}_{2,2})\}.$

Let  $ \mathbf{A}  =\mathbf{Q}^{(\mu)}$ (see the matrix (\ref{equ22})),
then
\begin{align*}
        \rho(\mathbf{Q}^{(\mu)}) = \max_{x \in \mathcal{S}^{(1)}_{\non}}  \rho (\mathbf{Q}^{(\mu)}_{x,x})
\end{align*}
as claimed.
\end{proof}

The following lemma provides lower and upper bounds on the spectral radius of the  transition probability submatrix  $\mathbf{Q}^{(\mu)}_{x,x}$.

\begin{lemma} \label{lemSpectralBoundsTran} The spectral radius of the transition submatrix $\mathbf{Q}^{(\mu)}_{x,x}$ satisfies
\begin{align}
\min_{ X \in  \mathcal{S}^{(\mu)}_{\same}(x)   }     {P(X, \mathcal{S}^{(\mu)}_{\same}(x))}  \le \rho(\mathbf{Q}^{(\mu)}_{x,x}) \le \max_{ X \in  \mathcal{S}^{(\mu)}_{\same}(x)  }     { P(X, \mathcal{S}^{(\mu)}_{\same}(x))}.
\end{align}
\end{lemma}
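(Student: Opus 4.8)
The plan is to recognize that this statement is the exact analogue of Lemma~\ref{lemSpectralBoundsFund}, applied to the diagonal block $\mathbf{Q}^{(\mu)}_{x,x}$ instead of to the whole fundamental matrix $\mathbf{N}$. The single tool I would invoke is the same one used there: for any nonnegative square matrix $\mathbf{A}=[a_{ij}]$, the spectral radius is squeezed between its smallest and largest row sums, $\min_i \sum_j a_{ij} \le \rho(\mathbf{A}) \le \max_i \sum_j a_{ij}$ (the Collatz--Wielandt consequence already cited in the proof of Lemma~\ref{lemSpectralBoundsFund}).

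First I would observe that $\mathbf{Q}^{(\mu)}_{x,x}$ is a nonnegative square matrix: it is the diagonal block of $\mathbf{Q}^{(\mu)}$ indexed by the populations of $\mathcal{S}^{(\mu)}_{\same}(x)$, and each of its entries is a transition probability $P(X,Y)$ with $X,Y \in \mathcal{S}^{(\mu)}_{\same}(x)$. The one point deserving explicit verification is the identity of the row sums. By the block-triangular decomposition~(\ref{equ22}), the sum of the $X$-th row of $\mathbf{Q}^{(\mu)}_{x,x}$ is
$$
\sum_{Y \in \mathcal{S}^{(\mu)}_{\same}(x)} P(X,Y) = P(X, \mathcal{S}^{(\mu)}_{\same}(x)),
$$
which is precisely the probability of remaining in the ``same best individual'' set after one generation. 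Applying the nonnegative-matrix bound with $\mathbf{A}=\mathbf{Q}^{(\mu)}_{x,x}$ and substituting this expression for each row sum yields
$$
\min_{X \in \mathcal{S}^{(\mu)}_{\same}(x)} P(X, \mathcal{S}^{(\mu)}_{\same}(x)) \le \rho(\mathbf{Q}^{(\mu)}_{x,x}) \le \max_{X \in \mathcal{S}^{(\mu)}_{\same}(x)} P(X, \mathcal{S}^{(\mu)}_{\same}(x)),
$$
which is exactly the claim.

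Because this reduces to a direct substitution into an already-cited result, there is no genuine obstacle; the only care needed is to confirm that summing over the diagonal block (rather than over the full row of $\mathbf{Q}^{(\mu)}$) produces exactly $P(X, \mathcal{S}^{(\mu)}_{\same}(x))$. This is guaranteed by the block structure of $\mathbf{Q}^{(\mu)}$ and is consistent with the elitist selection property~(\ref{equElitistSelectionProperty}): the remaining transition mass $P(X, \mathcal{S}^{(\mu)}_{\high}(x))$ escapes into the strictly higher blocks and hence contributes nothing to the diagonal block's row sum. I would therefore present this proof as a brief one, leaning entirely on the nonnegative-matrix bound and the row-sum identity above.
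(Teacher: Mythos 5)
Your proposal is correct and is essentially identical to the paper's own proof, which likewise reduces the claim to the nonnegative-matrix row-sum bound already invoked in Lemma~\ref{lemSpectralBoundsFund}, with $\mathbf{Q}^{(\mu)}_{x,x}$ substituted for $\mathbf{A}$. Your explicit verification that the row sums of the diagonal block equal $P(X,\mathcal{S}^{(\mu)}_{\same}(x))$ is a welcome (and correct) elaboration of a step the paper leaves implicit.
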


\begin{proof}
The proof is the same as that of Lemma~\ref{lemSpectralBoundsFund}. Indeed, substituting $\mathbf{Q}^{(\mu)}_{x,x}$ in place of $\mathbf{A} $ yields the desired conclusion.
\end{proof}


\subsection{$\rho$-Scalability Always Happens for Elitist EAs Exploiting Global Mutation}
An intuitive reason behind the use of population-based EAs is that a larger population size is likely to increase the convergence rate. The following proposition proves that this is, indeed, the case.

\begin{theorem}
\label{theScalability} Suppose a $(1+1)$ elitist EA and a $(\mu+\mu)$ elitist EA  (where $\mu \ge 2$) exploit identical global mutation operator to maximize the same fitness function. Then
$$
\mbox{$\rho$-scalability($\mu$)} >1.
$$
\end{theorem}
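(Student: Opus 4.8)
The plan is to translate the target inequality into a single statement about spectral radii of the transient submatrices. By the definition of $\rho$-scalability,
$$\rho\text{-scalability}(\mu)=\frac{1-\rho(\mathbf{Q}^{(\mu)})}{1-\rho(\mathbf{Q}^{(1)})},$$
and since convergence forces $\rho(\mathbf{Q}^{(1)}),\rho(\mathbf{Q}^{(\mu)})\in[0,1)$ (so both denominators are positive), the claim $\rho\text{-scalability}(\mu)>1$ is equivalent to the single inequality $\rho(\mathbf{Q}^{(\mu)})<\rho(\mathbf{Q}^{(1)})$. The whole proof thus reduces to showing that enlarging the population strictly lowers the spectral radius of the transient block.

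First I would pin down the $(1+1)$ side exactly. For a single non-optimal individual $x$, one mutation followed by elitist selection returns to $x$ precisely when the child fails to be strictly fitter, so $P(x,x)=1-P_M(x,\mathcal{S}^{(1)}_{\high}(x))$. Combined with Lemma~\ref{lem3}, this identifies $\rho(\mathbf{Q}^{(1)})=P(x_{\rho},x_{\rho})$, and in particular $P(x,x)\le\rho(\mathbf{Q}^{(1)})$ for every non-optimal $x$.

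Next I would bound the $(\mu+\mu)$ side block by block. By Lemma~\ref{lem4} it suffices to control each $\rho(\mathbf{Q}^{(\mu)}_{x,x})$, and by Lemma~\ref{lemSpectralBoundsTran} this is at most $\max_{X\in\mathcal{S}^{(\mu)}_{\same}(x)}P(X,\mathcal{S}^{(\mu)}_{\same}(x))$. For a population $X$ whose best individual is $x$, the elitist selection property~(\ref{equElitistSelectionProperty}) gives $P(X,\mathcal{S}^{(\mu)}_{\same}(x))=1-P(X,\mathcal{S}^{(\mu)}_{\high}(x))$; and because elitism lets the population escape to a higher level exactly when some offspring beats $x$, one has $P(X,\mathcal{S}^{(\mu)}_{\high}(x))\ge P_M(X,\mathcal{S}^{(\mu)}_{\high}(X))$. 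The strict form of the mutation property~(\ref{equMutationPropertyLower}) for global mutation with $\mu\ge2$ then yields $P_M(X,\mathcal{S}^{(\mu)}_{\high}(X))>P_M(x,\mathcal{S}^{(1)}_{\high}(x))$, whence $P(X,\mathcal{S}^{(\mu)}_{\same}(x))<1-P_M(x,\mathcal{S}^{(1)}_{\high}(x))=P(x,x)\le\rho(\mathbf{Q}^{(1)})$. Since $\mathcal{S}^{(\mu)}_{\same}(x)$ and $\mathcal{S}^{(1)}_{\non}$ are finite, maximizing over $X$ and then over $x$ preserves strictness, giving $\rho(\mathbf{Q}^{(\mu)})<\rho(\mathbf{Q}^{(1)})$, as required.

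The main obstacle is the middle identity linking selection-level transitions to mutation events: one must argue carefully that for an elitist EA the statement ``the next population leaves the fitness level of $x$'' coincides with ``mutation produces at least one child fitter than $x$'', so that the selection-side probability $P(X,\mathcal{S}^{(\mu)}_{\high}(x))$ can be bounded below by the mutation-side probability $P_M(X,\mathcal{S}^{(\mu)}_{\high}(X))$ to which the mutation property applies. A secondary delicate point is that the strict inequality must survive two successive maximizations; this is exactly where finiteness of the state spaces is essential, since a supremum of strictly smaller quantities need not itself be strictly smaller in an infinite setting.
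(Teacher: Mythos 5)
Your proposal is correct and follows essentially the same route as the paper's proof: reduce to $\rho(\mathbf{Q}^{(\mu)})<\rho(\mathbf{Q}^{(1)})$, use Lemma~\ref{lem3} and Lemma~\ref{lem4} to localize both spectral radii, bound each diagonal block via Lemma~\ref{lemSpectralBoundsTran}, and combine the elitist selection property~(\ref{equElitistSelectionProperty}) with the strict global-mutation inequality~(\ref{equMutationPropertyLower}). Your explicit handling of the step $P(X,\mathcal{S}^{(\mu)}_{\high}(x))\ge P_M(X,\mathcal{S}^{(\mu)}_{\high}(X))$ is a point the paper passes over silently, but it is the same argument.
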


\begin{proof}
For the $(1+1)$ elitist EA, let $x_{\rho} \in \mathcal{S}^{(1)}_{\non}$ be an individual such that
$
\rho(\mathbf{Q}^{(1)})=P(x_{\rho},x_{\rho}).
$

Likewise, for a $(\mu+\mu)$ elitist EA, Lemma~\ref{lem4} says that
$$
\rho(\mathbf{Q}^{(\mu)})=\max_{x \in \mathcal{S}^{(1)}_{\non}}\rho(\mathbf{Q}^{(\mu)}_{x,x}).
$$

Since the set $\mathcal{S}^{(1)}_{\non}$ is finite, there exists an $ x \in \mathcal{S}^{(1)}_{\non}$ such that
\begin{equation}\label{xspectralradius}
 \rho(\mathbf{Q}^{(\mu)})=\rho(\mathbf{Q}^{(\mu)}_{x,x}).
\end{equation}

Consider the transition probability matrix $\mathbf{Q}^{(\mu)}_{x,x}$ for such an $x$.

According to Lemma~\ref{lemSpectralBoundsTran}, the spectral radius  $\rho(\mathbf{Q}^{(\mu)}_{x,x})$  is bounded above as
$$
 \rho(\mathbf{Q}^{(\mu)}_{x,x})  \le  \max_{X \in \mathcal{S}^{(\mu)}_{\same}(x)}  P(X,\mathcal{S}^{(\mu)}_{\same}(x) ).
$$

Thus there exists an  $X=(x_1, x_2, \cdots, x_{\mu})$ in  the set $\mathcal{S}^{(\mu)}_{\same}(x)$ where $x_1=x$, and the above inequality holds for this specific $X$,
$$
 \rho(\mathbf{Q}^{(\mu)}_{x,x})  \le     P(X,\mathcal{S}^{(\mu)}_{\same}(x) ).
$$

Combining the inequality above with the  elitist selection  property~ (\ref{equElitistSelectionProperty}),
\begin{align*}
   P(X,\mathcal{S}^{(\mu)}_{\high}(x) ) + P(X,\mathcal{S}^{(\mu)}_{\same}(x) )=1,
\end{align*}
yields
\begin{align}
\label{rho-upper-bound-a}
 \rho(\mathbf{Q}^{(\mu)}_{x,x})
\le 1-P(X,\mathcal{S}^{(\mu)}_{\high}(x) ) .
\end{align}

Now the global  mutation property   tells us that, for $ \mu\ge 2$,
\begin{align*}
 P(X,\mathcal{S}^{(\mu)}_{\high}(x) )  >     P(x,\mathcal{S}^{(1)}_{\high}(x) )
  = 1- P(x,x).
  \end{align*}

Recall that $P(x_{\rho},x_{\rho})$ is the maximal self-transition probability so that
\begin{align*}
  P(X,\mathcal{S}^{(\mu)}_{\high}(x) ) \ge  1-P(x_{\rho},x_{\rho})
 =   1- \rho(\mathbf{Q}^{(1)} ).
\end{align*}
Substituting this bound into Inequality (\ref{rho-upper-bound-a}) yields
\begin{align*}
       \rho(\mathbf{Q}^{(\mu)}_{x,x}) <   \rho(\mathbf{Q}^{(1)} ).
\end{align*}

Recalling (\ref{xspectralradius}):
 $\rho(\mathbf{Q}^{(\mu)})=\rho(\mathbf{Q}^{(\mu)}_{x,x}),$
we deduce that
\begin{align*}
       &\rho(\mathbf{Q}^{(\mu)}) <   \rho(\mathbf{Q}^{(1)} ),\end{align*}
       so that $\mbox{$\rho$-scalability($\mu$)}    > 1.$ Thereby establishing the desired conclusion.
\end{proof}


\subsection{$\infty$-Scalability May Not Happen  for Elitist EAs using Global Mutation}
Another intuitive reason behind the use of population-based EAs is that a larger population size is likely to shorten the expected number of generations. Unfortunately sometimes this is wrong. The following example  shows  that increasing the population size  increases, rather than reduces, the maximum of the expected number of generations  for an EA to find an optimal solution. Equivalently, $\infty$-scalability$(\mu)<1$  for any $\mu \ge 2$.

The fitness function is given in Table~\ref{tab1}.
\begin{table}[ht]
\begin{center}
\begin{tabular}{c|c|c|c|c|c}
\hline
state & $x_0$ & $x_1$ & $x_2$ & $x_3$ & $x_4$\\
\hline
fitness & 5   & 4 & 3 & 2 & 1\\
\hline
\end{tabular}
\end{center}
\caption{Fitness function.}
\label{tab1}
\end{table}

Consider the following $(\mu+\mu)$ EA (=$(1+\mu)$ EA): the best individual is replicated by $\mu$ copies and each copy generates a child via mutation; the best individual is replaced only when a child is better than it. The mutation transition probabilities  $P(x,y), x, y =x_0, \cdots, x_4$ are given in Table~\ref{tab2}, where $\epsilon \geq 0$ is a sufficiently small constant (the size of $\epsilon$ will be discussed later). When $\epsilon >0$, the mutation operator is global.
\begin{table}[ht]
\begin{center}
\begin{tabular}{c|c|c|c|c|c}
\hline
state & $x_0$ & $x_1$ & $x_2$ & $x_3$ & $x_4$\\
\hline
$x_0$ & $1-4\epsilon$ & $\epsilon$  & $\epsilon$ & $\epsilon$ & $\epsilon$ \\
$x_1$ & $1-4\epsilon$ & $\epsilon$  & $\epsilon$ & $\epsilon$ & $\epsilon$ \\
$x_2$ & $\epsilon$ & $1-4\epsilon$ & $\epsilon$ & $\epsilon$ & $\epsilon$ \\
$x_3$ & $1-4 \epsilon$ & $\epsilon$ & $\epsilon$ & $\epsilon$ & $\epsilon$\\
$x_4$ & $\epsilon$ & $\epsilon$ & $0.5$ & $0.5-3 \epsilon$ & $\epsilon$\\
\hline
\end{tabular}
\end{center}
\caption{Mutation transition probability matrix.}
\label{tab2}
\end{table}

First, set $\epsilon=0$. The maximal value of the expected number of generations for the $(1+1)$ EA to encounter the optimal individual $x_0$ is the time when the EA starts from $x_4$. According to the probability transition matrix above, it takes $2$ time steps (with probability $1$) to reach the optimal state along the road $x_4 \to x_3 \to x_0$ (with probability $0.5$), while it takes $3$ time steps along the road $x_4 \to x_2 \to x_1 \to x_0$ (also with probability $0.5$). Thus, by definition of the expectation,
$$
m^{(1)}(x_4) =2 \cdot 0.5 +3 \cdot 0.5 = 2.5.
$$
The maximal value of the expected number of generations for the $(2+2)$ to reach a population containing the optimal individual $x_0$ is the time when the EA starts from $ (x_4, x_4)$. Because of the elitist selection, the only possible offspring population is $ (x_3, \, x_3)$ or $(x_2, \, x_2)$. Since the event of going from $(x_4,x_4)$  to $(x_3, \, x_3)$ happens only if both $x_4$s mutate into $x_3$, the probability of this event is $0.5 \cdot 0.5 = 0.25$. Consequently, the probability that the event of going from $(x_4,x_4)$  to $(x_2, \, x_2)$ happens is $1-0.25 =0.75$. Thus, according to the mutation probability transition matrix, the EA reaches the optimum along the road $(x_4,x_4) \to (x_3,x_3) \to (x_0, x_0)$  with probability $0.25$, while it does so along the road $(x_4,x_4) \to (x_2,x_2) \to (x_1,x_1) \to (x_0, x_0)$ with probability $0.75$, so that
$$
m^{(2)}(x_4, x_4) =2 \cdot 0.25 +3 \cdot 0.75 = 2.75.
$$

This demonstrates explicitly that the maximal value of the expected number of generations that the $(1+1)$ EA needs to reach an optimum is shorter than that the $(2+2)$ EA needs, since
\begin{equation}\label{counterExIneq}
m^{(1)}(x_4) =2.5< m^{(2)}(x_4, x_4)=2.75.
\end{equation}

Furthermore, the reasoning above generalizes to the case when $\mu > 2$ and shows that $$m^{(\mu)}( x_4,   \ldots, x_4 ) = 2 \cdot 0.5^{\mu} + 3 \cdot (1 - 0.5^{\mu}),$$ thereby demonstrating that $m^{(\mu)}({x_4,   \ldots, x_4})$ is a strictly increasing function of the population size $\mu$ (increasing the population size also increases the maximal value of the expected number of generations).

Now observe that $$m^{(1)}(x_4) - m^{(\mu)}(x_4, x_4)$$ is a continuous function of $\epsilon$ so that for small enough $\epsilon$ Inequality~(\ref{counterExIneq}) as well as the conclusion in the paragraph above still hold. Moreover, notice that the continuity argument implies that the elitist selection can also be alleviated to certain non-elitist selection  (non-best
individuals may replace the parent individual but with tiny probabilities) so that all the same conclusions remain valid.

\subsection{$a$-Scalability and $\hat{a}$-Scalability May Not Happen  for Elitist EAs using Global Mutation}
The following modification of the example in the previous subsection shows that increasing the population size may increase, rather than reduce, the average value of the expected number of generations, regardless of whether the population is chosen uniformly at random from the set of all transient states or from the set of all possible states. Equivalently,
$a$-scalability$(\mu)<1$ and $\hat{a}$-scalability$(\mu)<1$ for some  $\mu\ge 2$.

The fitness function is given in Table~\ref{tab3}.
\begin{table}[ht]
\begin{center}
\begin{tabular}{c|c|c|c|c|c}
\hline
state & $x_0$ & $x_1$ & $x_2$ & $x_3$ & $x_{i} (i =4, \cdots, 103)$ \\
\hline
fitness & 5   & 4 & 3 & 2 & 1\\
\hline
\end{tabular}
\end{center}
\caption{Fitness function.}
\label{tab3}
\end{table}

Consider the following $(\mu+\mu)$ EA (=$(1+\mu)$ EA): the best individual is replicated by $\mu$ copies and each copy generates a child via mutation; the best individual is replaced only when a child is better than it. The mutation transition probabilities appear in Table~\ref{tab4}, where $\epsilon \geq 0$ is a sufficiently small constant just as in the previous example.
\begin{table}[ht]
\begin{center}
\begin{tabular}{c|c|c|c|c|c}
\hline
state & $x_0$ & $x_1$ & $x_2$ & $x_3$ & $x_{i} (i \ge 4)$\\
\hline
$x_0$ & $1-4\epsilon$ & $\epsilon$  & $\epsilon$ & $\epsilon$ & $0.01\epsilon$ \\
$x_1$ & $1-4\epsilon$ & $\epsilon$  & $\epsilon$ & $\epsilon$ & $0.01\epsilon$ \\
$x_2$ & $\epsilon$ & $1-4\epsilon$ & $\epsilon$ & $\epsilon$ & $0.01\epsilon$ \\
$x_3$ & $1-4 \epsilon$ & $\epsilon$ & $\epsilon$ & $\epsilon$ & $0.01\epsilon$\\
$x_{i} (i \ge 4)$ & $\epsilon$ & $\epsilon$ & $0.5$ & $0.5-3 \epsilon$ & $0.01\epsilon$\\
\hline
\end{tabular}
\end{center}
\caption{Mutation transition probability matrix.}
\label{tab4}
\end{table}

The only difference from the previous example is that now there are $100$ ``bad'' states with the same largest expected number of generations (rather than only a single state, as in the previous example), compared with only $4$ ``good'' states with a smaller   expected number of generations. Thus the average value of the expected number of generations is almost the same as the maximal value of the expected number of generations. The parameter $\epsilon$ can be chosen sufficiently small and also elitist selection can be alleviated according to the same type of continuity argument as in the previous example, of course, to show that the $(1+1)$ EA outperforms the $(2+2)$, $(3+3)$ or $(4+4)$ EA.

\section{General Studies: Conditions for Superlinear $\rho$-Scalability to Take Place}
\label{secGeneralStudy}
\subsection{General Condition for Superlinear $\rho$-Scalability to Take Place}
In the current subsection we present a rather general sufficient and necessary condition for  superlinear scalability to take place that applies to both elitist and non-elitist EAs.   The condition is based on the concept of a ``road''. Intuitively, a road is a transition path between two states $X$ and $Y$. A rigorous definition appears below \citep{he1995convergence}.
\begin{definition}
Given two states $X, Y \in \mathcal{S}^{(\mu)}$, if there exists $k$ states $ X_0=X \to X_1 \to  \cdots \to  X_k=Y $ such that
$$
P(X_0, X_1) \cdots P(X_{k-1}, X_{k+1})>0,
$$  then $\{X_0, \cdots, X_k \}$ is called a \emph{road   from $X$ to $Y$}. We also say that $k$ is the length of the road. We write $road(X,  Y, k)$ to denote the set of all roads from $X$ to $Y$ having length $k$.
\end{definition}

Let $P(road(X, \mathcal{S}^{(\mu)}_{\opt}(x), k) )$ denote the probability of going from $X$ to the set $\mathcal{S}^{(\mu)}_{\opt}(x)$ via ``roads'' of length $k$.

A general sufficient and necessary condition for  superlinear scalability to take place appears in the following theorem. The theorem is largely based on the classical Gelfand's spectral radius formula.
\begin{theorem}
\label{theSuffNece}
Suppose we are given a $(1+1)$  EA and a $(\mu+\mu)$ EA  (where $\mu \ge 2$) that exploit identical mutation operator to maximize the same fitness function. For the $(\mu+\mu)$ EA,  supperlinear scalability  happens if and only if
there exists some
$k>0$ and for $X \in \mathcal{S}^{(\mu)}_{\non} $,
\begin{align}
   P (  \mbox{$road(X, \mathcal{S}^{(\mu)}_{\opt}, k)$  })
>   1- \left( 1- \mu (1-\rho(\mathbf{Q}^{(1)})) \right)^k.\label{equGneneralCondition}
\end{align}
\end{theorem}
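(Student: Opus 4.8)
The plan is to convert the statement into an inequality for $\rho(\mathbf{Q}^{(\mu)})$ and then to extract that spectral radius from the $k$-step absorption probabilities via Gelfand's formula. First I would unwind the definition: since $\rho(\mathbf{Q}^{(1)})<1$ gives $1-\rho(\mathbf{Q}^{(1)})>0$, the requirement $\rho$-scalability$(\mu)=(1-\rho(\mathbf{Q}^{(\mu)}))/(1-\rho(\mathbf{Q}^{(1)}))>\mu$ is equivalent to $\rho(\mathbf{Q}^{(\mu)})<s$, where I abbreviate $s:=1-\mu(1-\rho(\mathbf{Q}^{(1)}))$; note that the right-hand side of~(\ref{equGneneralCondition}) is exactly $1-s^k$. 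Because $\rho(\mathbf{Q}^{(\mu)})\ge 0$, superlinear scalability can occur only when $s>0$, so I would adopt $s>0$ as a standing assumption and treat $s\le 0$ separately at the end.

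The key reduction is to recognize $P(road(X,\mathcal{S}^{(\mu)}_{\opt},k))$ as the probability of absorption within $k$ generations from $X$. Since optimal populations are absorbing, a road that enters $\mathcal{S}^{(\mu)}_{\opt}$ before step $k$ stays there (the self-loop at an absorbing state is itself a positive-probability road), so summing over all length-$k$ roads ending in $\mathcal{S}^{(\mu)}_{\opt}$ gives exactly $P(\Phi_k\in\mathcal{S}^{(\mu)}_{\opt}\mid\Phi_0=X)$. Consequently $1-P(road(X,\mathcal{S}^{(\mu)}_{\opt},k))=\sum_{Y\in\mathcal{S}^{(\mu)}_{\non}}[(\mathbf{Q}^{(\mu)})^k]_{X,Y}$, the $X$-th row sum of $(\mathbf{Q}^{(\mu)})^k$. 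Requiring~(\ref{equGneneralCondition}) for every $X$ is therefore equivalent, after taking the maximum over $X$ and using the definition of the matrix $\infty$-norm, to the single inequality $\parallel(\mathbf{Q}^{(\mu)})^k\parallel_\infty<s^k$.

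With this reformulation the two implications are short. For the ``if'' direction, if $\parallel(\mathbf{Q}^{(\mu)})^k\parallel_\infty<s^k$ holds for some $k>0$, then using that the spectral radius is bounded above by any induced norm together with $\rho((\mathbf{Q}^{(\mu)})^k)=\rho(\mathbf{Q}^{(\mu)})^k$ I obtain $\rho(\mathbf{Q}^{(\mu)})^k\le\parallel(\mathbf{Q}^{(\mu)})^k\parallel_\infty<s^k$, and taking positive $k$-th roots (legitimate since $s>0$) yields $\rho(\mathbf{Q}^{(\mu)})<s$. For the ``only if'' direction, assume $\rho(\mathbf{Q}^{(\mu)})<s$; Gelfand's formula for the $\infty$-norm gives $\lim_{k\to\infty}\parallel(\mathbf{Q}^{(\mu)})^k\parallel_\infty^{1/k}=\rho(\mathbf{Q}^{(\mu)})<s$, so for all sufficiently large $k$ we have $\parallel(\mathbf{Q}^{(\mu)})^k\parallel_\infty<s^k$, and any such $k$ witnesses~(\ref{equGneneralCondition}).

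I expect the genuine obstacle to lie in the first two steps rather than in the spectral estimates. The delicate point is matching the combinatorial road probability to the analytic absorption probability $1-\sum_{Y}[(\mathbf{Q}^{(\mu)})^k]_{X,Y}$, making sure that roads entering $\mathcal{S}^{(\mu)}_{\opt}$ early are counted exactly once; and the quantifier is essential—only by demanding~(\ref{equGneneralCondition}) for every $X$ does one recover the $\infty$-norm (the maximizing row), whereas the ``if'' direction consumes a single $k$ while the ``only if'' direction supplies a whole tail of admissible $k$ from the limit. The remaining care concerns $s\le 0$: for $s=0$ the bound $1-s^k=1$ makes~(\ref{equGneneralCondition}) unsatisfiable, consistently with superlinear scalability being impossible, while for $s<0$ the monotonicity of $k$-th roots fails (even powers can satisfy the condition without $\rho(\mathbf{Q}^{(\mu)})<s$), so I would state the equivalence under the assumption $s>0$, which is precisely the regime in which superlinear scalability can arise.
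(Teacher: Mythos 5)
Your proof is correct and follows essentially the same route as the paper's: identify $1-P(road(X,\mathcal{S}^{(\mu)}_{\opt},k))$ with the row sums of $(\mathbf{Q}^{(\mu)})^k$, bound $\rho(\mathbf{Q}^{(\mu)})$ by $\parallel(\mathbf{Q}^{(\mu)})^k\parallel_\infty^{1/k}$ for sufficiency, and invoke Gelfand's formula for necessity. Your explicit treatment of the sign of $s=1-\mu(1-\rho(\mathbf{Q}^{(1)}))$ is a point the paper leaves implicit (its $k$-th-root step tacitly assumes $s>0$), so adding that standing assumption is a genuine, if minor, improvement in rigor.
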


\begin{proof}
(1) The proof that the condition is sufficient.

Suppose Inequality (\ref{equGneneralCondition}) holds for all populations $X$   in the set $\mathcal{S}^{(\mu)}_{\non}$. It follows from the assumption that
\begin{align*}
 \max_{X \in \mathcal{S}^{(\mu)}_{\non}}  P(\Phi_k \in \mathcal{S}^{(\mu)}_{\non} \mid \Phi_0= X )
 <    \left( 1- \mu (1-\rho(\mathbf{Q}^{(1)}) \right)^k.
\end{align*}

Rewriting the inequality above in terms of the $\infty$-norm, we obtain,
\begin{align}
\label{equNormBound}
 \parallel (\mathbf{Q}^{(\mu)})^k \parallel_{\infty}  < \left( 1- \mu (1-\rho(\mathbf{Q}^{(1)}) \right)^k.
\end{align}

Since the spectral radius of a matrix is not bigger than its maximum norm  \citep[p.619]{meyer2000matrix}, 
\begin{align*}
 \rho((\mathbf{Q}^{(\mu)})^k) \le  \parallel (\mathbf{Q}^{(\mu)})^k \parallel_{\infty}  ,
\end{align*}
so that
\[
 \rho(\mathbf{Q}^{(\mu)}) =\left(  \rho((\mathbf{Q}^{(\mu)})^k) \right)^{1/k} \le \left(\parallel (\mathbf{Q}^{(\mu)})^k \parallel_{\infty} \right)^{1/k}.
\]
Combining Inequality (\ref{equNormBound}) with the above inequality yields
\begin{align*}
 &\rho(\mathbf{Q}^{(\mu)}) <  1- \mu \left(1-\rho(\mathbf{Q}^{(1)}) \right),\\
 & \frac{1-\rho(\mathbf{Q}^{(\mu)})}{1-\rho(\mathbf{Q}^{(1)})} > \mu .
\end{align*}
This means that superlinear scalability takes place.

(2) The proof that the condition is necessary.

Suppose Inequality (\ref{equGneneralCondition}) does not hold. This means that
for any $  k>0$, there exists some  $X  \in \mathcal{S}^{(\mu)}_{\non}$ such that
\begin{align}
  P(  \mbox{$road(X , \mathcal{S}^{(\mu)}_{\opt}, k)$  })
\le   1- \left( 1- \mu (1-\rho(\mathbf{Q}^{(1)}) \right)^k.\label{equOneMax7}
\end{align}

It follows then that for any $k>0$  
\begin{align*}
&P(\Phi_k \in \mathcal{S}^{(\mu)}_{\opt}  \mid \Phi_0= X )
  \le 1-  \left( 1- \mu (1-\rho(\mathbf{Q}^{(1)}))\right)^k,\\
 &P(\Phi_k \in \mathcal{S}^{(\mu)}_{\non}  \mid \Phi_0= X )
  \ge   \left( 1- \mu (1-\rho(\mathbf{Q}^{(1)}))\right)^k,
\end{align*}
and this, in turn, implies that
\begin{align*}
 \max_{Y \in \mathcal{S}^{(\mu)}_{\non}  } P(\Phi_k \in \mathcal{S}^{(\mu)}_{\non}   \mid \Phi_0= Y )
\ge     \left( 1- \mu (1-\rho(\mathbf{Q}^{(1)})) \right)^k.
\end{align*}

Rewriting the inequality above in terms of the $\infty$-norm we deduce that,
\[
 \parallel (\mathbf{Q}^{(\mu)})^k \parallel_{\infty} \ge  \left( 1- \mu (1-\rho(\mathbf{Q}^{(1)})) \right)^k.
\]

Taking the limit as $k \to +\infty$, and applying Gelfand's spectral radius formula, we obtain:
\begin{align*}
 &\rho(\mathbf{Q}^{(\mu)}) = \lim_{k\to \infty} \left(\parallel (\mathbf{Q}^{(\mu)})^k \parallel_{\infty} \right)^{1/k},
\end{align*}
so that
\begin{align*}
&\rho(\mathbf{Q}^{(\mu)})  \ge  1- \mu (1-\rho(\mathbf{Q}^{(1)})),\\
 &\frac{1-\rho(\mathbf{Q}^{(\mu)}) }{1-\rho(\mathbf{Q}^{(1)}) }  \le \mu.
\end{align*}

This means that no superlinear scalability takes place.
\end{proof}

\subsection{Sufficient  and Necessary Condition for Superlinear $\rho$-Scalability to Happen for Elitist  EAs}
The sufficient and necessary condition for the superlinear scalability to occur that has been established in Theorem~\ref{theSuffNece} can be reformulated in a more explicit fashion when dealing with elitist EAs. We call this reformulation ``road through bridge''. A detailed analysis is provided in the current subsection.

\begin{definition}
An individual $y$ is called a \emph{bridgeable point} of an individual $x$ if  $y$ satisfies the following conditions:
\begin{enumerate}
  \item the fitness of $x$ is larger than that of $y$: $f(x) \ge f(y)$;
  \item the probability of going from   $x$ to the set $\mathcal{S}^{(1)}_{\high}(x)$ via mutation is smaller than that from   $y$ to the same set $\mathcal{S}^{(1)}_{\high}(x)$.
  \begin{equation}
  \begin{array}{lll}
   P_M(x,  \mathcal{S}^{(1)}_{\high}(x)) \le P_M(y,  \mathcal{S}^{(1)}_{\high}(x)).
  \end{array}
  \end{equation}
\end{enumerate}
\end{definition}

The term ``bridgeable point'' is motivated by the following intuitive notion:  $y$ may serve  as a ``bridge" for $x$ to step towards a higher fitness level.

To achieve superlinear scalability, it is important for elitist EAs to go through  some ``bridgeable  point''.  Intuitively   there are two types of roads going from a state towards a higher fitness level. One is the road  going from its current  fitness level directly towards the higher fitness level; another type is the road  through some bridgeable point  before reaching a higher fitness level.

Given a  population $X$ the best individual of which is $x$ and a population $Y$ in the set $\mathcal{S}^{(\mu)}_{\high}(x)$, the roads from $X$ to $Y$  can be classified into two categories:
 \begin{itemize}
 \item  \emph{Road through bridge $\{X_0=X, X_1, \cdots, X_{k-1}, X_k=Y\}$}:   at least one of the intermediate populations $X_1, \cdots, X_{k-1}$  contains a bridgeable point of $x$.

\item  \emph{Road  over gap $\{X_0=X, X_1, \cdots, X_{k-1}, X_k=Y\}$}:    none of the intermediate  populations $X_1, \cdots, X_{k-1}$  contains  a bridgeable point of $x$.
\end{itemize}

Let $P (road(X, \mathcal{S}^{(\mu)}_{\high}(x), k) \mbox{ through bridge})$ denote the probability of going from $X$ to the set $\mathcal{S}^{(\mu)}_{\high}(x)$ via ``roads through bridge'' of length $k$.

Likewise, let $P (road(X, \mathcal{S}^{(\mu)}_{\high}(x), k) \mbox{ over gap})$ denote the probability of going from $X$ to the set $\mathcal{S}^{(\mu)}_{\high}(x)$ via ``roads over gap'' of length $k$.

The following theorem provides a sufficient and necessary condition for  superlinear scalability to occur in case of elitist EAs in terms of the "road through bridge''.
\begin{theorem}
\label{theSuffNeceElitist}
Suppose we are given a $(1+1)$ elitist EA and a $(\mu+\mu)$ elitist EA  (where $\mu \ge 2$) that exploit identical mutation operator to maximize the same fitness function. Consider an individual $x_{\rho}$ such that
\[
x_{\rho}=\arg\max\{ P(x,x); x \in \mathcal{S}^{(1)}_{\non}\}.
\]

For the $(\mu+\mu)$ EA, supperlinear $\rho$-scalability  happens if and only if the following \emph{road through bridge}  condition holds:
there exists some $k>0$, such that for any  $x \in \mathcal{S}^{(1)}_{\non}$ and any $ X \in \mathcal{S}^{(\mu)}_{\same}(x)$,
\begin{align}
  &P(  \mbox{$road(X, \mathcal{S}^{(\mu)}_{\high}(x), k)$  over gap}) \nonumber
 +  P( \mbox{$road(X, \mathcal{S}^{(\mu)}_{\high}(x), k)$  through bridge}) \nonumber \\
>  &1- \left( 1- \mu (1-P(x_{\rho},x_{\rho}) \right)^k.\label{equOneMax6}
\end{align}
\end{theorem}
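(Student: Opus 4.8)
The plan is to recognise that Theorem~\ref{theSuffNeceElitist} is the block-local version of Theorem~\ref{theSuffNece}, obtained by exploiting the block lower-triangular structure of $\mathbf{Q}^{(\mu)}$ (equation (\ref{equ22})) together with the elitist selection property, and that the ``road through bridge'' / ``road over gap'' dichotomy merely re-expresses the total probability of escaping the current best fitness level. First I would observe that, for a fixed $x$ and any $X \in \mathcal{S}^{(\mu)}_{\same}(x)$, every road of length $k$ from $X$ into $\mathcal{S}^{(\mu)}_{\high}(x)$ either has some intermediate population containing a bridgeable point of $x$ or it does not; these two alternatives are mutually exclusive and exhaustive, so the left-hand side of (\ref{equOneMax6}) is exactly the total probability $P(road(X, \mathcal{S}^{(\mu)}_{\high}(x), k))$ of reaching $\mathcal{S}^{(\mu)}_{\high}(x)$ within $k$ generations. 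By the elitist selection property (\ref{equElitistSelectionProperty}) the best individual never drops below $f(x)$, so $\mathcal{S}^{(\mu)}_{\high}(x)$ is closed and this reaching probability equals $P(\Phi_k \in \mathcal{S}^{(\mu)}_{\high}(x) \mid \Phi_0 = X) = 1 - P(\Phi_k \in \mathcal{S}^{(\mu)}_{\same}(x) \mid \Phi_0 = X)$.

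Since transitions remaining inside $\mathcal{S}^{(\mu)}_{\same}(x)$ are governed precisely by the diagonal block $\mathbf{Q}^{(\mu)}_{x,x}$, and leaving this set is irreversible, the survival probability $P(\Phi_k \in \mathcal{S}^{(\mu)}_{\same}(x) \mid \Phi_0 = X)$ is the $X$-row sum of $(\mathbf{Q}^{(\mu)}_{x,x})^k$, and its maximum over $X$ is $\| (\mathbf{Q}^{(\mu)}_{x,x})^k \|_{\infty}$. Next I would invoke Lemma~\ref{lem3} to replace $P(x_{\rho},x_{\rho})$ by $\rho(\mathbf{Q}^{(1)})$, so that condition (\ref{equOneMax6}) holding for all $X \in \mathcal{S}^{(\mu)}_{\same}(x)$ is equivalent, using finiteness of $\mathcal{S}^{(\mu)}_{\same}(x)$ to pass between ``for all $X$'' and ``$\max_X$'', to $\| (\mathbf{Q}^{(\mu)}_{x,x})^k \|_{\infty} < (1 - \mu(1 - \rho(\mathbf{Q}^{(1)})))^k$.

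For sufficiency I would run the argument of Theorem~\ref{theSuffNece} block by block: from $\rho((\mathbf{Q}^{(\mu)}_{x,x})^k) \le \| (\mathbf{Q}^{(\mu)}_{x,x})^k \|_{\infty}$ and $\rho(\mathbf{Q}^{(\mu)}_{x,x}) = \rho((\mathbf{Q}^{(\mu)}_{x,x})^k)^{1/k}$ I obtain $\rho(\mathbf{Q}^{(\mu)}_{x,x}) < 1 - \mu(1 - \rho(\mathbf{Q}^{(1)}))$ for every $x$, and Lemma~\ref{lem4} then gives $\rho(\mathbf{Q}^{(\mu)}) = \max_x \rho(\mathbf{Q}^{(\mu)}_{x,x}) < 1 - \mu(1 - \rho(\mathbf{Q}^{(1)}))$, which rearranges into $(1 - \rho(\mathbf{Q}^{(\mu)}))/(1 - \rho(\mathbf{Q}^{(1)})) > \mu$, i.e. superlinear $\rho$-scalability. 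For necessity I would argue by contraposition: if the condition fails, then for every $k>0$ there is some $x$ and some $X \in \mathcal{S}^{(\mu)}_{\same}(x)$ with $\| (\mathbf{Q}^{(\mu)}_{x,x})^k \|_{\infty} \ge (1 - \mu(1 - \rho(\mathbf{Q}^{(1)})))^k$; because $\mathcal{S}^{(1)}_{\non}$ is finite, some fixed $x^{*}$ realizes this for infinitely many $k$, and applying Gelfand's spectral radius formula along that subsequence (the full limit exists, so the subsequential limit agrees with it) yields $\rho(\mathbf{Q}^{(\mu)}) \ge \rho(\mathbf{Q}^{(\mu)}_{x^{*},x^{*}}) \ge 1 - \mu(1 - \rho(\mathbf{Q}^{(1)}))$, so superlinear scalability fails.

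The step I expect to be the main obstacle is the bookkeeping in the necessity direction: extracting a single block index $x^{*}$ valid for infinitely many lengths $k$ by the pigeonhole principle and justifying the subsequential Gelfand limit, together with verifying rigorously that the bridge/gap decomposition is genuinely an exhaustive partition of the roads so that the elaborate-looking left-hand side of (\ref{equOneMax6}) collapses to the single escape probability $P(\Phi_k \in \mathcal{S}^{(\mu)}_{\high}(x) \mid \Phi_0 = X)$. The degenerate regime where $1 - \mu(1 - \rho(\mathbf{Q}^{(1)}))$ is negative is handled exactly as in Theorem~\ref{theSuffNece} and introduces no further difficulty.
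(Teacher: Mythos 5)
Your proposal is correct and follows essentially the same route as the paper's proof: reduce to Theorem~\ref{theSuffNece} applied blockwise to the diagonal submatrices $\mathbf{Q}^{(\mu)}_{x,x}$ of the block lower-triangular matrix in (\ref{equ22}), using Lemma~\ref{lem3} to identify $P(x_{\rho},x_{\rho})$ with $\rho(\mathbf{Q}^{(1)})$, Lemma~\ref{lem4} to pass between $\rho(\mathbf{Q}^{(\mu)})$ and the blockwise spectral radii, and the elitist selection property to collapse the bridge/gap dichotomy into the single escape probability $P(road(X,\mathcal{S}^{(\mu)}_{\high}(x),k))$. The paper states this reduction in one line (``apply Theorem~\ref{theSuffNece} to $\mathbf{Q}^{(\mu)}_{x,x}$''), whereas you unfold it explicitly (row sums of $(\mathbf{Q}^{(\mu)}_{x,x})^k$, pigeonhole over the finitely many blocks in the necessity direction), which is a faithful and if anything more careful rendering of the same argument.
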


\begin{proof}
For the $(1+1)$ elitist EA, Lemma~\ref{lem3} tells us that
\[
    \rho(\mathbf{Q}^{(1)}) = P(x_{\rho},x_{\rho}).
\]

For the $(\mu+\mu)$ EA, from Lemma~\ref{lem4}, it follows that
\begin{align*}
    \rho(\mathbf{Q}^{(\mu)}) =\max_{x \in \mathcal{S}^{(1)}_{\non}}\rho( \mathbf{Q}^{(\mu)}_{x,x}).
\end{align*}

Since the set $\mathcal{S}^{(1)}_{\non}$ is finite, there is some $x \in \mathcal{S}^{(1)}_{\non}$ such that
\begin{align}
\label{max-rho-sufficient}
    \rho(\mathbf{Q}^{(\mu)}) = \rho( \mathbf{Q}^{(\mu)}_{x,x}).
\end{align}

Thanks to elitist selection, Inequality~(\ref{equOneMax6}) is equivalent to saying that for any $X \in \mathcal{S}^{(\mu)}_{\same}(x)$
\begin{align*}
 P ( road(X, \mathcal{S}^{(\mu)}_{\high}(x), k)
 >  1- \left( 1- \mu (1-\rho(\mathbf{Q}^{(1)}) \right)^k.
\end{align*}

The desired conclusion now follows directly from Theorem~\ref{theSuffNece} applied to the Markov transition submatrix $\mathbf{Q}^{(\mu)}_{x,x}$ and replacing $\mathcal{S}^{(\mu)}_{\non}$ and $\mathcal{S}^{(\mu)}_{\opt}$ in Theorem~\ref{theSuffNece} by $\mathcal{S}^{(\mu)}_{\same}(x)$ and $\mathcal{S}^{(\mu)}_{\high}(x)$ respectively.
\end{proof}

\subsection{Necessary Condition for   Superlinear $\rho$-Scalability to Happen for Elitist  EAs}

The following theorem informs us further that the existence of a ``road through bridge'' is a necessary condition for  superlinear scalability to take place. Let $\mathcal{S}^{(\mu)}_{\bridge}(x)$ denote the set of all populations  which contain  a bridgeable point of $x$.

\begin{theorem}
\label{theBridgeNecessary}
Suppose we are given a $(1+1)$ elitist EA and a $(\mu+\mu)$ elitist EA  (where $\mu \ge 2$) that exploit identical mutation operator to maximize the same fitness function. Consider an individual $x_{\rho}$ such that
\[
x_{\rho}=\arg\max\{ P(x,x); x \in \mathcal{S}^{(1)}_{\non}\}.
\]

Let $
X_{\rho} =(x_{\rho}, \cdots, x_{\rho})$. If for some  population size $\mu\ge2$,   and for any $k>0$
\begin{equation}\label{equBridgeNecessary}
 P( \mbox{   $Road(X_{\rho}, \mathcal{S}^{(\mu)}_{\high}(x_{\rho}), k)$  through bridge}) =0,
\end{equation}
then no supperlinear $\rho$-scalability ever takes place for such a $\mu$.
\end{theorem}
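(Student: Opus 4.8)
The plan is to reduce the statement to a single lower bound on the spectral radius of the diagonal block $\mathbf{Q}^{(\mu)}_{x_\rho,x_\rho}$ and then to extract that bound from Gelfand's formula. By Lemma~\ref{lem4} we have $\rho(\mathbf{Q}^{(\mu)})\ge\rho(\mathbf{Q}^{(\mu)}_{x_\rho,x_\rho})$, and by Lemma~\ref{lem3} we have $\rho(\mathbf{Q}^{(1)})=P(x_\rho,x_\rho)$. Since the definition of $\rho$-scalability shows that ``no superlinear scalability'' is exactly the inequality $1-\rho(\mathbf{Q}^{(\mu)})\le\mu\left(1-\rho(\mathbf{Q}^{(1)})\right)$, it suffices to prove
\[
\rho(\mathbf{Q}^{(\mu)}_{x_\rho,x_\rho})\ge 1-\mu\left(1-P(x_\rho,x_\rho)\right).
\]
If $\mu\left(1-P(x_\rho,x_\rho)\right)\ge 1$ the right-hand side is non-positive and there is nothing to prove (indeed $1-\rho(\mathbf{Q}^{(\mu)})\le 1\le\mu(1-\rho(\mathbf{Q}^{(1)}))$ already), so I may assume $\beta:=\mu\left(1-P(x_\rho,x_\rho)\right)<1$.

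The key estimate is a uniform one-step bound on the probability of escaping the level $\mathcal{S}^{(\mu)}_{\same}(x_\rho)$ from a population that contains no bridgeable point of $x_\rho$. For such an $X=(x_1,\dots,x_\mu)\in\mathcal{S}^{(\mu)}_{\same}(x_\rho)$, elitist selection identifies the escape event with the mutation event, so $P(X,\mathcal{S}^{(\mu)}_{\high}(x_\rho))=P_M(X,\mathcal{S}^{(\mu)}_{\high}(x_\rho))$; the mutation-property upper bound~(\ref{equMutationPropertyUpper}) then gives $P(X,\mathcal{S}^{(\mu)}_{\high}(x_\rho))\le\sum_{i=1}^{\mu}P_M(x_i,\mathcal{S}^{(1)}_{\high}(x_\rho))$. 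Because every $x_i$ lies at fitness $\le f(x_\rho)$ and is not a bridgeable point of $x_\rho$, each summand is at most $P_M(x_\rho,\mathcal{S}^{(1)}_{\high}(x_\rho))=1-P(x_\rho,x_\rho)$ (with equality for the copy of $x_\rho$ and strict inequality for the non-bridgeable individuals), whence $P(X,\mathcal{S}^{(\mu)}_{\high}(x_\rho))\le\beta$ and, by the elitist selection property~(\ref{equElitistSelectionProperty}), $P(X,\mathcal{S}^{(\mu)}_{\same}(x_\rho))\ge 1-\beta$.

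Next I would propagate this bound along trajectories issuing from $X_\rho$. The hypothesis~(\ref{equBridgeNecessary}) that the through-bridge probability vanishes for every $k$, combined with global mutation, forces every intermediate population actually reachable from $X_\rho$ before escaping to $\mathcal{S}^{(\mu)}_{\high}(x_\rho)$ to be free of bridgeable points: if some bridge population $X$ were visited with positive probability at a time $j\ge 1$, then, since global mutation allows $X$ to jump into the optimal set in one further step, the path $X_\rho\to\cdots\to X\to\mathcal{S}^{(\mu)}_{\high}(x_\rho)$ would be a positive-probability road through bridge, contradicting~(\ref{equBridgeNecessary}). Hence the one-step estimate applies at every step, and a direct induction gives $P(\Phi_k\in\mathcal{S}^{(\mu)}_{\same}(x_\rho)\mid\Phi_0=X_\rho)\ge(1-\beta)^k$. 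Reading this row sum as a lower bound on the matrix norm, $\parallel(\mathbf{Q}^{(\mu)}_{x_\rho,x_\rho})^k\parallel_\infty\ge(1-\beta)^k$, and applying Gelfand's spectral radius formula yields $\rho(\mathbf{Q}^{(\mu)}_{x_\rho,x_\rho})=\lim_{k\to\infty}\parallel(\mathbf{Q}^{(\mu)}_{x_\rho,x_\rho})^k\parallel_\infty^{1/k}\ge 1-\beta$, which is the required bound.

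I expect the main obstacle to be this third step: turning the purely ``road-counting'' hypothesis~(\ref{equBridgeNecessary}) into a per-step escape bound valid on precisely the set of states the chain can actually occupy. The subtle point is that $P(X,\mathcal{S}^{(\mu)}_{\high}(x_\rho))\le\beta$ holds only for populations with no bridgeable point, so one must argue that bridge populations are never entered with positive probability, rather than merely never used as the last stop before $\mathcal{S}^{(\mu)}_{\high}(x_\rho)$; this is exactly where global mutation, guaranteeing a one-step jump to the optimum from any state, is indispensable. A secondary point of care is the degenerate reading of the bridgeable-point definition under which $x_\rho$ would count as a bridgeable point of itself, which must be excluded (or absorbed into the $x_\rho$-summand above) so that the hypothesis is not vacuous.
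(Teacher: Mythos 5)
Your proposal is correct and its core is the same as the paper's: both proofs hinge on the observation that on a population $X\in\mathcal{S}^{(\mu)}_{\same}(x_{\rho})$ containing no bridgeable point of $x_{\rho}$, the mutation property (\ref{equMutationPropertyUpper}) together with the elitist selection property (\ref{equElitistSelectionProperty}) forces $P(X,\mathcal{S}^{(\mu)}_{\same}(x_{\rho}))\ge 1-\mu\left(1-P(x_{\rho},x_{\rho})\right)$, and both then convert this per-step staying probability into the bound $\rho(\mathbf{Q}^{(\mu)})\ge 1-\mu\left(1-\rho(\mathbf{Q}^{(1)})\right)$ via Lemmas~\ref{lem3} and~\ref{lem4}. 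The difference lies in the last conversion step and in one point of rigour. The paper uses hypothesis (\ref{equBridgeNecessary}) to declare $\mathbf{Q}^{(\mu)}_{x_{\rho},x_{\rho}}$ reducible, isolates the diagonal block $\mathbf{\hat{Q}}^{(\mu)}_{x_{\rho},x_{\rho}}$ on $\mathcal{S}^{(\mu)}_{\same}(x_{\rho})\setminus\mathcal{S}^{(\mu)}_{\bridge}(x_{\rho})$, and applies the min-row-sum bound of Lemma~\ref{lemSpectralBoundsTran}; you instead iterate the one-step bound along trajectories from $X_{\rho}$ to get $\parallel(\mathbf{Q}^{(\mu)}_{x_{\rho},x_{\rho}})^k\parallel_{\infty}\ge(1-\beta)^k$ and invoke Gelfand's formula. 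The two mechanisms are equivalent in strength here, but your route buys something real: the paper simply asserts that (\ref{equBridgeNecessary}) implies the probability of moving from \emph{any} state of $\mathcal{S}^{(\mu)}_{\same}(x_{\rho})\setminus\mathcal{S}^{(\mu)}_{\bridge}(x_{\rho})$ into $\mathcal{S}^{(\mu)}_{\bridge}(x_{\rho})$ is zero, which as stated is not justified for states unreachable from $X_{\rho}$, whereas your trajectory formulation automatically confines attention to reachable states and supplies the missing argument (a visited bridge population plus a positive-probability continuation into $\mathcal{S}^{(\mu)}_{\high}(x_{\rho})$ --- obtainable from global mutation in one step, or from the standing convergence assumption in finitely many steps --- would produce a positive-probability road through bridge, contradicting (\ref{equBridgeNecessary})). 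Your two flagged caveats are also well taken: the degenerate case $\mu\left(1-P(x_{\rho},x_{\rho})\right)\ge 1$ is trivially harmless, and the literal definition of a bridgeable point would make $x_{\rho}$ a bridgeable point of itself and render the hypothesis vacuous, so it must indeed be read as excluding $x_{\rho}$ --- a convention the paper uses implicitly (e.g., in Proposition~\ref{proNon-bridgeable}) without stating.
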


\begin{proof}
For the $(1+1)$ elitist EA, from Lemma~\ref{lem3},
\[
    \rho(\mathbf{Q}^{(1)}) = P(x_{\rho},x_{\rho}).
\]

For a $(\mu+\mu)$ elitist EA (where $\mu\ge 2$),  consider the Markov transition probability submatrix $\mathbf{Q}^{(\mu)}_{x_{\rho},x_{\rho}}$.

Split the set $\mathcal{S}^{(\mu)}_{\same}(x_{\rho})$  into two subsets:
 $\mathcal{S}^{(\mu)}_{\bridge}(x_{\rho})$  and  $\mathcal{S}^{(\mu)}_{\same}(x_{\rho}) \setminus \mathcal{S}^{(\mu)}_{\bridge}(x_{\rho})$.

From the condition of the theorem, for any $k >0$,
\begin{equation*}
 P( \mbox{$Road(X_{\rho}, \mathcal{S}^{(\mu)}_{\high}(x_{\rho}), k)$  through bridge}) =0,
\end{equation*}
so that the probability of going from any state in the set $\mathcal{S}^{(\mu)}_{\same}(x_{\rho}) \setminus \mathcal{S}^{(\mu)}_{\bridge}(x_{\rho})$ to the set $\mathcal{S}^{(\mu)}_{\bridge}(x_{\rho})$ is $0$.

Hence the matrix $\mathbf{Q}^{(\mu)}_{x_{\rho},x_{\rho}}$ is reducible. We write it in the following form

\[
\begin{pmatrix}
\mathbf{\hat{Q}}^{(\mu)}_{x_{\rho},x_{\rho}} & \mathbf{{O}} \\
  *  & **
\end{pmatrix},
\]
where
$\mathbf{\hat{Q}}^{(\mu)}_{x_{\rho},x_{\rho}}$ represents the transition probability submatrix within the set $\mathcal{S}^{(\mu)}_{\same}(x_{\rho}) \setminus  \mathcal{S}^{(\mu)}_{\bridge}(x_{\rho})$. The $*$ part represents transition probabilities from the set $\mathcal{S}^{(\mu)}_{\bridge}(x_{\rho})$ to the set $\mathcal{S}^{(\mu)}_{\same}(x_{\rho}) $, and the part labelled with the $**$ symbol stands for the transition probabilities within the set $\mathcal{S}^{(\mu)}_{\bridge}(x_{\rho})$. $\mathbf{O}$ denotes a zero matrix.

For any population $X=(x_1, \cdots, x_{\mu})$ in the set $\mathcal{S}^{(\mu)}_{\same}(x_{\rho}) \setminus \mathcal{S}^{(\mu)}_{\bridge}(x_{\rho})$, since none of its individuals is a  ``bridgeable point'' of $x_{\rho}$, we have
\begin{align*}
P_M(x, \mathcal{S}^{(1)}_{\high}(x_{\rho})) & \le P_M(x_{\rho}, \mathcal{S}^{(1)}_{\high}(x_{\rho}))=1-\rho(\mathbf{Q}^{(1)}).
\end{align*}
According to the mutation property (\ref{equMutationPropertyUpper}), for any $\mu >2$,
\[
P(X, \mathcal{S}^{(\mu)}_{\high}(x_{\rho})) < \mu (1-\rho(\mathbf{Q}^{(1)})) .
\]

Since the inequality holds for any population in the set $\mathcal{S}^{(\mu)}_{\same}(x_{\rho}) \setminus \mathcal{S}^{(\mu)}_{\bridge}(x_{\rho})$, we have
\[
\min_{X \in \mathcal{S}^{(\mu)}_{\same}(x_{\rho}) \setminus \mathcal{S}^{(\mu)}_{\bridge} (x_{\rho}))} P (X, \mathcal{S}^{(\mu)}_{\same}(x_{\rho}) \setminus \mathcal{S}^{(\mu)}_{\bridge} (x_{\rho})) \ge 1-\mu (1-\rho(\mathbf{Q}^{(1)})).
\]

According to Lemma~\ref{lemSpectralBoundsTran},
\begin{align*}
\rho(\mathbf{\hat{Q}}^{(\mu)}_{x_{\rho},x_{\rho}})\ge 1- \mu (1-\rho(\mathbf{Q}^{(1)})).
\end{align*}
Since
$\rho(\mathbf{Q}^{(\mu)}) \ge \rho(\mathbf{\hat{Q}}^{(\mu)}_{x_{\rho},x_{\rho}})$, we deduce that \begin{align*}
1-\rho(\mathbf{Q}^{(\mu)}) \le \mu (1-\rho(\mathbf{Q}^{(1)})),
\end{align*}
which means that no superlinear $\rho$-scalability takes place.
\end{proof}

From the theorem, we deduce two necessary conditions for  superlinear $\rho$-scalability to happen as follows. First of all,
bridgeable point(s) must exist. Furthermore, bridgeable point(s) must be preserved during selection. Consequently, if there is no population diversity, then there is no superlinear $\rho$-scalability.

\section{Case Study 1: No Superlinear $\rho$-Scalability on   Non-bridgeable Fitness Landscapes}
\label{secCaseStudy1}

\subsection{Superlinear $\rho$-Scalability Never Happens to  Non-bridgeable Fitness Landscapes}

\begin{definition}
Given a fitness function $f(x)$, we say that the  fitness landscape assocated with a $(1+1)$ EA is \emph{non-bridgeable} meaning that for any two non-optimal states $x$ and $y$, if $x$ has a better fitness than  $y $ (that is $f(x) \ge f(y)$), then starting from $x$, the EA has a larger probability to enter a higher fitness level, $\mathcal{S}^{(1)}_{\high}(x)$ than starting from $y$ (and arriving at the same subset $\mathcal{S}^{(1)}_{\high}(x)$), that is 
\begin{align*}
 P_M(x, \mathcal{S}^{(1)}_{\high}(x)  )
>  P_M(y, \mathcal{S}^{(1)}_{\high}(x)).
\end{align*}
\end{definition}

In terms of the average convergence rate, using a population delivers no superlinear scalability   if the fitness landscape  associated with the $(1+1)$ EA is  non-bridgeable. The following theorem demonstrates this.
\begin{proposition}
  \label{proNon-bridgeable}
Suppose we are given a $(1+1)$ elitist EA and a $(\mu+\mu)$ elitist EAs (where $\mu \ge 2$) that exploit identical mutation operator for maximizing the same fitness function. If the fitness landscape associated with the $(1+1)$ EA is  non-bridgeable, then no superlinear $\rho$-scalability happens.
\end{proposition}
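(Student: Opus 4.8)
The plan is to deduce the proposition directly from Theorem~\ref{theBridgeNecessary} by showing that the non-bridgeable hypothesis forces the set of bridgeable points of $x_{\rho}$ to be empty, so that the ``road through bridge'' probability vanishes identically for every length $k$ and every $\mu \ge 2$.

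First I would unwind the two definitions and compare them. By definition, an individual $y$ is a bridgeable point of $x$ precisely when $f(x) \ge f(y)$ and $P_M(x, \mathcal{S}^{(1)}_{\high}(x)) \le P_M(y, \mathcal{S}^{(1)}_{\high}(x))$. Any candidate bridgeable point $y$ of $x_{\rho}$ automatically satisfies $f(y) \le f(x_{\rho})$, and since $x_{\rho}$ is non-optimal this forces $y$ to be non-optimal as well, so the non-bridgeable hypothesis genuinely applies to the pair $(x_{\rho}, y)$. That hypothesis asserts the strict reverse inequality $P_M(x_{\rho}, \mathcal{S}^{(1)}_{\high}(x_{\rho})) > P_M(y, \mathcal{S}^{(1)}_{\high}(x_{\rho}))$, which directly contradicts the second defining condition of a bridgeable point. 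Hence $x_{\rho}$ admits no bridgeable point whatsoever.

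Next I would lift this observation to the level of populations. Since $\mathcal{S}^{(\mu)}_{\bridge}(x_{\rho})$ is the set of populations containing a bridgeable point of $x_{\rho}$ and no such point exists, we obtain $\mathcal{S}^{(\mu)}_{\bridge}(x_{\rho}) = \emptyset$ for every $\mu \ge 2$. A road through bridge is, by definition, a road at least one of whose intermediate populations lies in $\mathcal{S}^{(\mu)}_{\bridge}(x_{\rho})$; as this set is empty, no such road can exist, and therefore
\[
P\bigl( Road(X_{\rho}, \mathcal{S}^{(\mu)}_{\high}(x_{\rho}), k) \text{ through bridge} \bigr) = 0
\]
for all $k > 0$ and all $\mu \ge 2$.

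Finally I would apply Theorem~\ref{theBridgeNecessary}, whose hypothesis is exactly this vanishing of the road-through-bridge probability for every $k$, and whose conclusion is that no superlinear $\rho$-scalability occurs for the corresponding $\mu$; since the argument is uniform in $\mu$, the proposition follows for all $\mu \ge 2$. I do not expect a serious obstacle: the entire content is the recognition that non-bridgeability is precisely the negation of the existence of any bridgeable point, after which the result is an immediate corollary of Theorem~\ref{theBridgeNecessary}. The only point requiring care is confirming that every candidate bridgeable point of $x_{\rho}$ is automatically non-optimal, so that the non-bridgeable hypothesis, which is stated for pairs of non-optimal states, really does apply.
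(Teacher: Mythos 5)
Your proposal is correct and follows essentially the same route as the paper: the paper's own proof also takes $x_{\rho}$ and $X_{\rho}=(x_{\rho},\dots,x_{\rho})$, observes that non-bridgeability leaves $x_{\rho}$ with no bridgeable point so the road-through-bridge probability vanishes, and then invokes Theorem~\ref{theBridgeNecessary}. Your version merely spells out the definitional comparison (and the non-optimality of any candidate $y$) that the paper dismisses with ``it is easy to see.''
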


\begin{proof}
Let $x_{\rho}$ be an individual such that
\[
x_{\rho}=\arg\max\{ P(x,x); x \in \mathcal{S}^{(1)}_{\non}\},
\]
and $
X_{\rho} =(x_{\rho}, \cdots, x_{\rho})$.
It is easy to see there is no bridgeable point for $X_{\rho}$, so that for $k >0$
\begin{align*}
 P( \mbox{$Road(X_{\rho}, \mathcal{S}^{(\mu)}_{\high}(x_{\rho}), k)$  through bridge}) =0.
\end{align*}
The desired conclusion now follows directly from Theorem~\ref{theBridgeNecessary}.
\end{proof}

Analogous results have been established under $\infty$-scalability and
$a$-scalability for non-bridgeable fitness landscapes which are called monotonic fitness landscapes in \citet{he2012general}.

 \subsection{An Example of Non-bridgeable Fitness Landscapes}\label{nonBridgeableSubsectExample}
\label{exaOneMax}
Consider  the average capacity 0-1 knapsack problem~\citep{martello1990knapsack}, described as follows: let $x$ be a binary string $(s_1, \cdots, s_n) \in \{0,1\}^n$,
\begin{align*}
 \max_{x}  \sum^n_{i=1} v_i s_i  \quad
 \mbox{subject to } \sum^n_{i=1} w_i s_i \le C,
\end{align*}
where $v_i$ is the value of the $i$-th item, $w_i$ its weight and $C=0.5 \sum^n_{i=1} w_i$ is the capacity.

Consider the instance where the values and the weights of the items  $v_i = w_i =1$  for $i=1, \cdots, n$, and $C =0.5  n$.
The fitness function for this instance is similar to the One-Max function.
\begin{equation}
\label{equOneMax} f_1(x)=
\left\{
\begin{array}{ll}
\sum^n_{i=1} s_i, & \mbox{if } \sum^n_{i=1}  s_i \le 0.5 n,\\
\mbox{infeasible}, &\mbox{otherwise}.
\end{array}
\right.
\end{equation}

An individual is represented by a binary string. The $(\mu+\mu)$ EA for the instance uses the following mutation and selection operators.
\begin{itemize}
\item \emph{Randomised Initialisation:} generate $\mu$ feasible solutions (individuals) at   random.
\item \emph{Bitwise Mutation:} given a string $x$, flip each bit independently with flipping probability $1/n$. If an individual generates an infeasible offspring, the offspring is rejected immediately, while the parent is automatically transferred into the intermediate population of children after mutation.

\item \emph{Elitist Selection:} any elitist selection operator will do.
\end{itemize}

The fitness landscape associated with the $(1+1)$ EA is  non-bridgeable.
According to Proposition~\ref{proNon-bridgeable}, superlinear $\rho$-scalability never happens, meaning that using a population does not increase the average convergence rate.
Similar results have been established before in terms of the expected number of generations it takes to reach the optimum for the One-Max problem. \citet{sudholt2011general}  proved that the (1+1) EA is the best EA to tackle this problem. \citet{jansen2005choice} also analysed the relationship between the runtime and population size but under the big $O$ notation.

\section{Case Study 2: Superlinear $\rho$-Scalability on Bridgeable Fitness Landscapes}
\label{secCaseStudy2}

\subsection{Superlinear   $\rho$-Scalability May Happen on Certain Bridgeable Fitness Landscapes}

\begin{definition}
Given a fitness function $f(x)$, we say that the fitness landscape associated with a $(1+1)$ is \emph{bridgeable} if there exit two non-optimal states $x$ and $y$ where $x$ has a better fitness than  $y$ (that is $f(x) \ge f(y)$) while the probability of entering a higher fitness level $\mathcal{S}^{(1)}_{\high}(x)$ starting from the state $x$ is not less than that starting from $y$, that is
\begin{align*}
 P_M(x, \mathcal{S}^{(1)}_{\high}(x)  )
\le P_M(y, \mathcal{S}^{(1)}_{\high}(x)).
\end{align*}
\end{definition}

Proposition~\ref{proBridgeable} below investigates a particular scenario where ``roads through bridge'' exist on bridgeable fitness landscapes, thereby demonstrating that the use of a population may be helpful when coping with bridgeable fitness landscapes in the sense that superlinear scalability could be achieved under certain conditions.

\begin{proposition}
\label{proBridgeable} Suppose we are given a $(1+1)$ elitist EA and a $(\mu+\mu)$ elitist EAs (where $\mu \ge 2$) that exploit identical mutation operator to maximize the same fitness function. Suppose that the  fitness landscape associated with $(1+1)$ EA is bridgeable. Assume further, that the following conditions hold:
\begin{enumerate}
    \item  \emph{Fitness diversity preservation:} given  $\Phi_t= X$ and $\Phi_{t+1/2}=Y$, if there exists one or more individuals in $X$ or $Y$ whose fitness is less than that of the best individual of $X$, then at least one of these individuals must be selected into the next population with positive probability.

    \item \emph{Existence of bridgeable points:} let $x_{\rho}$ be a state at the 2nd highest fitness level. We require that
    \[
P(x_{\rho},x_{\rho})= \max_{z \in \mathcal{S}^{(1)}_{\non}}  P(z,z)  .
\] All other states  at   lower fitness levels   are bridgeable points of $x_{\rho}$. The probability of going from a bridgeable point $y$ to the optimal set via mutation is  larger than that from $x_{\rho}$ by a factor of $\mu$:
    \begin{align}
 P_M(y, \mathcal{S}^{(1)}_{\opt}    )
  \ge    \mu P_M(x, \mathcal{S}^{(1)}_{\opt}) .\label{equOneMax1}
    \end{align}

    \item \emph{Pass through bridgeable points:} The probability of going from the $x_{\rho}$ above to the set of  bridgeable  points via mutation is  large enough in the following sense:
     \begin{align}
   P_M(x_{\rho}, \mathcal{S}^{(1)}_{\bridge}(x_{\rho}) )
  \ge    \mu P_M(x_{\rho}, \mathcal{S}^{(1)}_{\opt} ) . \label{equOneMax2}
    \end{align}

\end{enumerate}
Then  superlinear $\rho$-scalability happens for such a $\mu$.
\end{proposition}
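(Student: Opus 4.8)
The plan is to derive the conclusion from Theorem~\ref{theSuffNeceElitist}, whose road-through-bridge condition is a sufficient criterion for superlinear $\rho$-scalability. Equivalently, combining Lemma~\ref{lem4} with Lemma~\ref{lem3}, it suffices to establish the single spectral inequality $\rho(\mathbf{Q}^{(\mu)})=\max_{x}\rho(\mathbf{Q}^{(\mu)}_{x,x})<1-\mu\bigl(1-P(x_{\rho},x_{\rho})\bigr)$. Writing $p:=1-P(x_{\rho},x_{\rho})$, and using that $x_{\rho}$ sits at the second-highest fitness level so that $\mathcal{S}^{(1)}_{\high}(x_{\rho})=\mathcal{S}^{(1)}_{\opt}$ and hence $p=P_M(x_{\rho},\mathcal{S}^{(1)}_{\opt})$, the target reads $\rho(\mathbf{Q}^{(\mu)}_{x,x})<1-\mu p$ for every non-optimal $x$. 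I would single out $x=x_{\rho}$ as the binding block, since it has the slowest single-individual escape and therefore governs $\rho(\mathbf{Q}^{(\mu)})$; the blocks for $x$ at strictly lower levels enjoy a strictly larger escape target $\mathcal{S}^{(1)}_{\high}(x)$, and those at the same level satisfy $1-P(x,x)\ge p$, so I would verify that these are handled by the same mechanism and are non-binding.

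First I would analyse the sub-chain on $\mathcal{S}^{(\mu)}_{\same}(x_{\rho})$ and isolate its two competing escape channels. The \emph{direct} channel sends $X_{\rho}=(x_{\rho},\dots,x_{\rho})$ into $\mathcal{S}^{(\mu)}_{\opt}$ with probability $1-(1-p)^{\mu}$, which by Bernoulli's inequality is at most $\mu p$; this is precisely why a single generation, and more generally a crude application of Lemma~\ref{lemSpectralBoundsTran}, cannot force $\rho(\mathbf{Q}^{(\mu)}_{x_{\rho},x_{\rho}})$ below $1-\mu p$. The \emph{bridge} channel supplies the missing gain: by the pass-through condition~(\ref{equOneMax2}) a single copy mutates into $\mathcal{S}^{(1)}_{\bridge}(x_{\rho})$ with probability at least $\mu p$; by fitness diversity preservation such a lower-fitness individual survives selection with positive probability, so the chain genuinely enters $\mathcal{S}^{(\mu)}_{\bridge}(x_{\rho})$; and from any population in $\mathcal{S}^{(\mu)}_{\bridge}(x_{\rho})$ the bridge point escapes to $\mathcal{S}^{(1)}_{\opt}$ with probability at least $\mu p$ by~(\ref{equOneMax1}), so that together with the surviving $x_{\rho}$-copies the one-step escape probability of a bridge population \emph{strictly} exceeds $\mu p$.

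The heart of the argument is to convert these two channels into the strict bound $\rho(\mathbf{Q}^{(\mu)}_{x_{\rho},x_{\rho}})<1-\mu p$. I would use the Collatz--Wielandt machinery underlying Lemma~\ref{lemMaxMin} in its min-max form: it is enough to exhibit a strictly positive test vector $\mathbf{w}$ on $\mathcal{S}^{(\mu)}_{\same}(x_{\rho})$ with $\mathbf{Q}^{(\mu)}_{x_{\rho},x_{\rho}}\mathbf{w}\le(1-\mu p)\,\mathbf{w}$ entrywise, with strict slack on the bridge rows. I would weight the gap-type populations (all individuals at level $f(x_{\rho})$) slightly above the bridge-type populations, tuning the weights so that the first-order deficit $\mu p-\bigl(1-(1-p)^{\mu}\bigr)=\binom{\mu}{2}p^{2}+O(p^{3})$ of the direct channel on gap states is absorbed by the positive flow into the faster bridge states quantified by~(\ref{equOneMax1}) and~(\ref{equOneMax2}). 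Checking $\rho(\mathbf{Q}^{(\mu)}_{x,x})<1-\mu p$ for the remaining blocks $x\neq x_{\rho}$ and then invoking Lemma~\ref{lem4} would complete the argument.

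The hard part will be the construction and verification of $\mathbf{w}$, precisely because the bridge channel is second-order small in $p$ (both reaching a bridge point and its subsequent escape are $O(p)$ events) whereas the deficit it must cover enters at first order in the threshold comparison. This is also why no fixed finite road length $k$ in Theorem~\ref{theSuffNeceElitist} will suffice: the road-through-bridge probability is $O(p^{2})$ while $1-(1-\mu p)^{k}$ outstrips the achievable reach probability for every fixed $k$, so the estimate is genuinely asymptotic and amounts to a strict decrease of the dominant eigenvalue. The most delicate technical point is the strength of fitness diversity preservation: a bridge point must be \emph{retained} by selection long enough for its rate advantage~(\ref{equOneMax1}) to register in the spectral radius, and the return flow from bridge populations back to gap populations must be controlled so that it does not cancel the gain. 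I expect this to be where the $\mu$-factors in~(\ref{equOneMax1}) and~(\ref{equOneMax2}) are exactly calibrated, and I would take care to make explicit how much quantitative retention the test-vector estimate actually requires beyond the bare positivity asserted in condition~1.
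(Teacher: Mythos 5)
Your proposal does not reach a proof, and the point at which it stalls is instructive when set against what the paper actually does. The paper's proof is precisely the route you reject: it verifies the road-through-bridge condition (\ref{equOneMax6}) of Theorem~\ref{theSuffNeceElitist} at the fixed length $k=2$, splitting the non-optimal populations into the all-$x_{\rho}$ population, which reaches the optimum in two steps through a bridge population with probability at least $\mu^{2}p^{2}$ (where $p=1-P(x_{\rho},x_{\rho})$) by (\ref{equOneMax1}) and (\ref{equOneMax2}), and populations already containing a bridgeable point, whose one-step escape probability is at least $\mu p$. Your argument that ``no fixed finite road length $k$ will suffice'' rests on a miscount: you weigh only the $O(p^{2})$ bridge channel against the full threshold $1-(1-\mu p)^{k}\approx k\mu p$, but the $k$-step reach probability also contains the direct channel, which by itself already contributes $1-(1-p)^{k\mu}\approx k\mu p-\binom{k\mu}{2}p^{2}$. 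The bridge channel therefore only has to cover a second-order deficit, namely $(1-p)^{2\mu}-(1-\mu p)^{2}=\mu(\mu-1)p^{2}+O(p^{3})$ at $k=2$, and the $O(p^{2})$ road-through-bridge probability is of exactly the size needed to cover it. So the fixed-$k$ certificate is not doomed, and discarding it removes the only finite, checkable criterion available here.

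Having discarded that route, what you put in its place is a plan rather than a proof. The entire burden is shifted onto a Collatz--Wielandt test vector $\mathbf{w}$ satisfying $\mathbf{Q}^{(\mu)}_{x_{\rho},x_{\rho}}\mathbf{w}\le(1-\mu p)\,\mathbf{w}$ entrywise, but $\mathbf{w}$ is never constructed and the entrywise verification is never attempted; you yourself label this ``the hard part'' and leave it open. Moreover, you correctly observe that any such construction needs a quantitative lower bound on how much probability mass actually lands in, and stays in, $\mathcal{S}^{(\mu)}_{\bridge}(x_{\rho})$ after selection, and that condition~1 guarantees only bare positivity --- and then you do not resolve this, so the one numerical input your own mechanism requires is exactly the one you flag as unavailable. (To be fair, your diagnosis is sharp: the one-generation bound of Lemma~\ref{lemSpectralBoundsTran} is genuinely inconclusive here, the bridge gain is second order, and the paper's own Case~1 estimate tacitly carries the factor $\mu$ from the mutation bound (\ref{equOneMax2}) through selection without quantifying retention. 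But a correct diagnosis plus an unexecuted construction is not a proof.) To repair the attempt, either carry out the $k=2$ road computation with both channels counted, as the paper does, or actually exhibit $\mathbf{w}$ and state explicitly the quantitative retention hypothesis the estimate needs.
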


\begin{proof}
For the $(\mu+\mu)$ EA, let $X$ be any population in the non-optimal set. Now consider the probability of going from   $X$ to the optimal set in two generations. It is convenient to analyze two complementary cases according to the different types of the population $X$.

\paragraph*{Case 1} The population $X = (x_{\rho}, \, x_{\rho}, \ldots, x_{\rho})$ consists of the repeated copies of the fittest individual $x_{\rho}$: .

From Conditions (\ref{equOneMax1}) and (\ref{equOneMax2}), the probability of going from $X$ to  the optimal set in two generations is greater than
\begin{align*}
&P(\Phi_{t+2}  \in \mathcal{S}^{(\mu)}_{\opt} \mid \Phi_{t+1} \in \mathcal{S}^{(\mu)}_{\bridge} (x_{\rho})) \cdot
   (P(\Phi_{t+1} \in \mathcal{S}^{(\mu)}_{\bridge} (x_{\rho}) \mid \Phi_t=X  )   \\
&\ge  \mu^2   (P (x_{\rho}, \mathcal{S}^{(1)}_{\opt} ))^2 \\
&=  \mu^2   (1-P(x_{\rho}, x_{\rho}))^2.
\end{align*}

\paragraph*{Case 2} A population $X =(x_1, \cdots, x_{\mu})$ that contains at least one bridgeable point of $x_{\rho}$ (recall condition 2).

From Condition (\ref{equOneMax2}), the probability of going from $X$ to the optimal set in two generations is greater than
\begin{align*}
&P(\Phi_{t+2}  \in \mathcal{S}^{(\mu)}_{\opt} \mid \Phi_{t+1} \in \mathcal{S}^{(\mu)}_{\opt}) \cdot P(\Phi_{t+1} \in \mathcal{S}^{(\mu)}_{\opt} \mid \Phi_t=X )   \\
&\ge    \mu (P (x_{\rho},   \mathcal{S}^{(1)}_{\opt} ))\\
&= \mu   (1-P(x_{\rho}, x_{\rho}))\\
&\ge   \mu^2   (1-P(x_{\rho}, x_{\rho}))^2.
\end{align*}

Thus, after examining the two mutually exhaustive cases above, we deduce that for all populations $X$ in the non-optimal set,
\begin{align*}
 P(\Phi_{t+2}  \in \mathcal{S}^{(\mu)}_{\opt} \mid \Phi_t =X) &\ge   \mu^2   (1-P(x_{\rho}, x_{\rho}))^2\\
&>   1- \left( 1- \mu (1-P(x_{\rho},x_{\rho}) \right)^2.
\end{align*}

The inequality above shows that the  \emph{road through bridge}  condition (\ref{equOneMax6}) holds implying that the superlinear scalability takes place thanks to theorem~\ref{theSuffNeceElitist}.
\end{proof}

\subsection{An Example of Bridgeable Fitness Landscapes}\label{bridgeableExampleSubsect}
\label{exaBridgeable}
Consider another instance of the average capacity 0-1 knapsack problem: the value of the item $v_1=n$ while the remaining items have values $v_i=1$ for $i=2, \cdots, n$; the weight of the item $w_1=n-1$, while the weights of the remaining items $w_i=1$ for $i=2, \cdots, n$. The capacity $C = n-1$.
The fitness function resembles a fully deceptive function \citep{he2002individual}.
\begin{equation}
\label{equBridgeable}
f_2(x)=\left \{
\begin{array}{llll}
n, & \mbox{if } s_1=1, s_2=\cdots =s_n=0;\\
 \sum^{n}_{i=2}  s_i, &\mbox{if } s_1=0;  \\
\mbox{infeasible}, &\mbox{otherwise}.
\end{array}
\right.
\end{equation}

 An individual is represented by a binary string. The $(\mu+\mu)$ EA for the instance uses the following mutation and selection operators.
\begin{itemize}
\item \emph{Randomised Initialisation:} generate $\mu$ feasible solutions (individuals) at   random.

\item \emph{Bitwise Mutation:} given a binary string $x$, flip each bit independently with probability $1/n$.  If an individual generates an infeasible offspring, the offspring is rejected immediately and the parent is transferred into in the population of children.

\item   \emph{Elitist Proportional Selection: }  the best individual is replaced if the best child individual is fitter, while the non-best individuals are selected from the two populations $X$ and $Y$ (disregarding the best individual) via fitness proportional selection.
\end{itemize}

Notice that the self-transition probability, $P(x,x)$, is maximal when $x=(0,1,\cdots, 1)$. The unique optimal solution having the highest fitness level, $n$, is $(1,0,\cdots, 0)$.  The unique solution having the second highest fitness level, namely $n-1$, is $x=(0,1,\cdots, 1)$. All the other feasible individuals (solutions) are bridgeable points of $x$.

The event of going from   $x=(0,1,\cdots, 1)$  to $(1,0, \cdots, 0)$ via mutation occurs if and only if all of the bits are flipped. The probability of this event happening is
$$
 P_M(x, \mathcal{S}^{(1)}_{\opt} )= \left(\frac{1}{n}\right)^n.
$$

The event of going from  any other feasible state  $y$  (except $(1,0,\cdots, 0)$ and $(0,1,\cdots, 1)$) to $(1,0, \cdots, 0)$ happens if and only if the first bit is flipped,  all the other $1$-valued bits are flipped and all the other $0$-valued bits are unchanged. Let $\mid y\mid$ denote the number of 1-valued bits in $y$.   Since $y$  is a feasible solution but  except $(1,0,\cdots, 0)$ and $(0,1,\cdots, 1)$ so that $\mid y \mid < n-1$. The probability of this event happening  is:
$$   P_M(y, \mathcal{S}^{(1)}_{\opt} ) =  \left( 1-\frac{1}{n} \right)^{n-\mid y \mid-1} \left(\frac{1}{n}\right)^{\mid y \mid+1}.$$

The event of going from the only individual $x=(0,1,\cdots, 1)$ at the second highest fitness level to the set of bridgeable points happens if and only if the first bit is not flipped and at least one of the other bits is flipped. The probability  of this event is then
\begin{align*}
 P_M(x,  \mathcal{S}^{(1)}_{\bridge} (x))=
  \left( 1-\frac{1}{n} \right)   \sum^{n-1}_{k=1} {n-1 \choose k} \left(\frac{1}{n}\right)^k \left( 1-\frac{1}{n} \right)^{n-k-1}.
\end{align*}

Thus, conditions (\ref{equOneMax1}) and (\ref{equOneMax2}) hold for any population size $\mu\le n$ implying that  superlinear $\rho$-scalability happens for  $\mu\le n$.

\section{Conclusions and Discussions}
\label{secConcluion}
\subsection{Conclusions}
A novel approach, based on the fundamental matrix   of absorbing Markov chains, is introduced to study population scalability of EAs.
The spectral radius and matrix norms of the fundamental matrix are used as  a  measure of the performance  of an EA. The reciprocal of the spectral radius  $1/\rho(\mathbf{N})$  is the  average convergence rate interpreted through the notion of the geometric mean. The $\infty$-norm  $\parallel \mathbf{N} \parallel_{\infty}$ is the maximum value of the expected number of generations to encounter an optimum solution for the first time. The $a$-norm $\parallel \mathbf{N} \parallel_a$ is the average value of the expected number of generations to encounter an optimum solution for the first over all transient initial states. Three different notions of population scalability are proposed in the paper: $\rho$-scalability (based on the spectral radius of the fundamental matrix), $\infty$-scalability  (based on the $\infty$-norm  of the fundamental matrix) and $a$-scalability (based on the $a$-norm   of the fundamental matrix).

 The main results of the paper may be summarized in two parts.
\begin{enumerate}

\item Theorem~\ref{theScalability} shows that  $\rho$-scalability always happens  for elitist EAs using global mutation. For a population-based EA  using identical mutation, the average convergence rate of a $(\mu+\mu)$ EA (where $\mu\ge2$) is always larger than that of the $(1+1)$ EA. Nonetheless, $a$-scalability and $\infty$-scalability may not take place. Using a larger population size sometimes will increase, rather than reduce, the expected number of generations to encounter an optimal solution for the first time (when measured either in terms of the maximum value or the average value). This fact is counterintuitive to the commonly accepted ``rule of thumb" in evolutionary computation.

\item Theorems~\ref{theSuffNece}, \ref{theSuffNeceElitist} and~\ref{theBridgeNecessary} provide  sufficient and/or necessary conditions for the superlinear $\rho$-scalability to take place.
The conditions indicate that for the elitist EAs optimizing the same fitness function and using identical mutation operators, the average convergence rate of a $(\mu+\mu)$ EA (where $\mu\ge2$) is more than $\mu$ times that of the corresponding $(1+1)$ EA if and only if the probability of passing through the ``roads through bridge'' is sufficiently large for the $(\mu+\mu)$ EA.
\end{enumerate}

In order to illustrate the theoretical findings above, two cases studies are provided in the paper. The first case study shows that the average convergence rate of a $(\mu+\mu)$ EA (where $\mu\ge2$) is never larger than $\mu$ times that of the $(1+1)$ EA on any non-bridgeable fitness landscapes. The second one illustrates that the average convergence rate of a $(\mu+\mu)$ EA (where $\mu\ge 2$) might be larger than $\mu$ times that of the $(1+1)$ EA on certain bridgeable fitness landscapes.

The notion of population scalability is not intended to compare the performance of the corresponding $(1+1)$ and $(\mu + \mu)$ EAs on all instances of a given combinatorial optimization problem such as, for instance, the 0-1 knapsak problem, at once. Indeed, as we have seen in Subsections~\ref{nonBridgeableSubsectExample} and \ref{bridgeableExampleSubsect}, for the same pair of the corresponding $(1+1)$ and $(\mu + \mu)$ EAs, population scalability may take place on one instance, and, simultaneously, not on another instance of the 0-1 knapsak problem, so that it is meaningless to consider population scalability on all instances at once.

While the approach based on the notion of the fundamental matrix is rather virtuous for analysing and understanding the notion of population scalability of EAs, it is unlikely to be practical when it comes to calculating $\rho$-population scalability for a specified pair of EAs optimizing a given fitness function. This is due to the fact that the fundamental matrix is usually difficult to compute. Likewise the calculation of $\infty$-scalability and $a$-scalability is not an easy job.

There are still  many open questions some of which are listed below. Can we pin down any insightful sufficient and/or necessary conditions that the expected number of generations that a $(\mu+\mu)$ EA (where $\mu \ge 2$) takes to encounter an optimal solution for the first time is greater than that the corresponding $(1+1)$ EA (measure by either the average value or maximum value) does? Can we address the same question in terms of the superlinear scalability? How to determine the threshold of the population size when an EA loses its superlinear scalability? Is there any feasible approach to calculating or, at least, estimating the population scalability?

 \subsection{Discussions: Other Types of EAs}

The condition  that the EAs are convergent is not necessary. If an EA is not convergent, then $\rho(\mathbf{Q}^{(\mu)})=1$. In this case, the corresponding notion of the population scalability is revised as follows:
\begin{equation}
\mbox{$\rho$-scalability($\mu$)} =
\left\{ \begin{array}{lll}
\frac{\rho(\mathbf{N}^{(1)})}{\rho(\mathbf{N}^{(\mu)})}, &\mbox{if } \rho(\mathbf{Q}^{(1)})<1,\\
+\infty, &\mbox{if } \rho(\mathbf{Q}^{(1)})=1, \rho(\mathbf{Q}^{(\mu)})<1,\\
\mbox{indefined}, &\mbox{if } \rho(\mathbf{Q}^{(1)})=1, \rho(\mathbf{Q}^{(\mu)})=1.
\end{array}
\right.
\end{equation}

 If a mutation operator is not global, we can easily revise it exploiting mixed strategy~\citep{he2012pure}. We apply this mutation operator with the probability $1-\epsilon$ and apply a global mutation with the probability $\epsilon$ for a  small $\epsilon>0$. Evidently, the mixed strategy mutation operator obtained in this manner is global.

Crossover is widely used in EAs. Since a $(1+1)$ EA  doesn't include any crossover operator, it is not an appropriate candidate as the benchmark EA. Instead, a $(2+2)$ EA with  crossover would play the role of the benchmark EA. Thus, the notion of scalability would then be revised accordingly:
\begin{equation}
\mbox{$\rho$-scalability($\mu$)}=\frac{\rho (\mathbf{N}^{(2)})}{\rho(\mathbf{N}^{(\mu)})}.
\end{equation}

The EAs above can still be modelled by absorbing Markov chains and Theorem~\ref{theSuffNece}   is applicable.

Nonetheless, it seems rather difficult to apply the fundamental matrix approach to EAs that exploit time-dependent genetic operators.
\section*{Acknowledgements}
We would especially like to thank Professor G\"unter Rudolph who has been involved in initializing the research issue of population scalability.   This work is supported by  the EPSRC under Grant EP/I009809/1 and National Natural Science Foundation of China under Grant 61170081.

\end{document}